\documentclass[12pt]{article} 
\usepackage{amsmath,amssymb} 
\usepackage{graphicx}
\usepackage{color}
\usepackage{amsfonts}
\usepackage{epstopdf}
\usepackage{algorithm}
\usepackage{algorithmicx}
\usepackage{algpseudocode} 

\usepackage{amsthm}
\newtheorem{lemma}{Lemma}


\newcommand{\norm}[1]{\left\|#1\right\|}
\newcommand\numberthis{\addtocounter{equation}{1}\tag{\theequation}}

\graphicspath{ {./pic/} }

\begin{document}

\title{AMS-Net: Adaptive Multiscale Sparse Neural Network with Interpretable Basis Expansion for Multiphase Flow Problems}


 
 \author{Yating Wang\thanks{Department of Mathematics, Purdue University, West Lafayette, IN 47907, USA.}
 \and Wing Tat Leung \thanks{Department of Mathematics, University of California, Irvine, Irvine, CA 92697, USA.}
 	\and Guang Lin\thanks{Department of Mathematics, School of Mechanical Engineering, Department of Statistics (Courtesy), Department of Earth, Atmospheric, and Planetary Sciences (Courtesy), Purdue University, West Lafayette, IN 47907, USA }} 
 
 \maketitle
 

\begin{abstract}

In this work, we propose an adaptive sparse learning algorithm that can be applied to learn the physical processes and obtain a sparse representation of the solution given a large snapshot space. 
Assume that there is a rich class of precomputed basis functions that can be used to approximate the quantity of interest. For instance, in the simulation of multiscale flow system, one can adopt mixed multiscale methods to compute velocity bases from local problems and apply the proper orthogonal decomposition (POD) method to construct bases for the saturation equation. We then design a neural network architecture to learn the coefficients of solutions in the spaces which are spanned by these basis functions. The information of the basis functions are incorporated in the loss function, which minimizes the differences between the downscaled reduced order solutions and reference solutions at multiple time steps. The network contains multiple submodules and the solutions at different time steps can be learned simultaneously. We propose some strategies in the learning framework to identify important degrees of freedom. To find a sparse solution representation, a soft thresholding operator is applied to enforce the sparsity of the output coefficient vectors of the neural network. To avoid over-simplification and enrich the approximation space, some degrees of freedom can be added back to the system through a greedy algorithm. In both scenarios, that is, removing and adding degrees of freedoms, the corresponding network connections are pruned or reactivated guided by the magnitude of the solution coefficients obtained from the network outputs. The proposed adaptive learning process are applied to some toy case examples to demonstrate that it can achieve a good basis selection and accurate approximation. More numerical tests are successfully performed on two-phase multiscale flow problems to show the capability and interpretability of the proposed method on complicated applications.
	
\end{abstract}

\section{Introduction}

Dynamical systems of flow and transport process in heterogeneous media are naturally existing in diverse science and engineering applications, such as groundwater flow, reservoir management, and so on. These physical problems are usually formulated in domains containing multiple scales, such as fractures at multiple length scales, or pores ranging from centimeters to meters. Numerical simulations for these problems are challenging since recovering all scale information will result in heavy computational burden. Furthermore, due to the lack of finest scale information, there are usually uncertainties in the computational model. It is necessary to develop model reduction techniques \cite{ee03,hfmq98, henning2009heterogeneous, chinesta2011short, aarnes2008mixed,arbogast2007multiscale} and construct fast alternatives to perform efficient simulations. The reduced order model can represent the physical properties of the full problem and can speed up the computations for the forward problem, which eventually helps to quantify the uncertainties in the model.

\noindent There are many model reduction methods including local and global approaches and have achieved significant success in numerous applications. In the family of local approaches, one can formulate appropriate local problems on coarse grid regions, construct effective properties or local multiscale basis functions, and further develop global systems on the coarse grid level. For instance, numerical upscaling, multiscale methods, and generalized multiscale methods \cite{egw10, GMsFEM13, AdaptiveGMsFEM, MixedGMsFEM, chung2017constraint, NLMC}. For global approaches\cite{ehg04, nonlinear_AM2015, cegg13, yang2016PodDeim}, such as the proper orthogonal decomposition method, one computes snapshots by solving several global problems and performs spectral decomposition to select the dominant modes. It has been extensively applied for numerical simulations of dynamical systems but still encounters difficulties for nonlinear problems. The objective of our work is to propose a framework which combines advanced deep learning techniques and multiscale basis construction methods, to obtain multiscale solutions with a sparse representation in the snapshot space.

\noindent Deep learning has become a quite popular approach for numerical approximation of nonlinear differential equations in recent days. Applications include developing surrogate models based on the properties of classical numerical solvers, such as constructing a multiscale neural network based on hierarchical multigrid solvers and encoder-decoder neural networks for solutions of heterogeneous elliptic PDEs \cite{Fan2018MNNH, PCDL_nz, karumuri2020simulator,li2020variational, bhattacharya2020model}. Physics-informed neural networks \cite{PINN1,PINN2,lu2019deeponet, meng2020composite} were proposed to incorporate physical laws in the loss function and limited data to train the neural network, and then get approximations of solutions in the whole temporal-spatial domain. However, learning full fine-scale solutions is challenging due to the extremely large number of parameters in the neural networks, some algorithms were established to design sparse neural network models to learn flow dynamics with high dimensional stochastic input coefficients \cite{wang2020efficient, wang2020bayesian}. Some other approaches include learning coarse grid effective properties using the nonlocal multicontinuum upscaling method or coefficients in the proper orthogonal decomposition (POD) projections \cite{wang2018NLMC_DNN, pod_dl_SW}. Furthermore, a deep neural network combined with multiscale model reduction techniques was investigated \cite{wang2020reduced} where the forward operators of the flow problem were learned in a reduced way without using POD approaches. In these works, the coefficients in the reduced order model are designed to have physical meanings, thus learning these quantities can provide important physical information without downscaling the coarse scale solution vector. In a more general setting, the coefficients of the basis functions are not directly related to the quantities of interest, then one may incorporate the basis functions as prior information in the training process. Some prior task-dependent dictionaries are incorporated to PINN method \cite{peng2020accelerating}, and an algorithm is proposed to take advantage of the features provided by dictionaries and achieve faster convergence.

\noindent In this work, we are interested in the multiscale two-phase problem in subsurface flow applications, where the equations are nonlinear and time dependent. One can parameterize the nonlinear equations and compute suitable basis functions for a set of sample parameters. However, the formed dictionaries may be too large and only a sparse selection of the bases in the dictionaries are needed for the solution approximation. Some model reduction techniques such as reduced basis method or greedy algorithm \cite{buffa2012priori, nguyen2008multiscale, jiang2017model, he2019reduced, gerner2012certified,li2020data, zhang2015multiscale} has been applied to solve parameterized elliptic PDEs. We aim to design an adaptive sparse learning algorithm with the help of precomputed basis functions as the prior dictionary, and apply it to the coupled two-phase flow systems. To be specific, for the construction of the prior dictionary for the flow equation where the model coefficient has high-contrast multiscale features, we adopt the mixed generalized multiscale finite element method \cite{MixedGMsFEM, chan2016adaptive} for velocity multiscale basis construction.
Given a specific source configuration, we first solve the system at several time instances in a small time interval. The fine scale saturation profiles at these time instances are used to form relative permeabilities in the flow equation. With this parameterization of the permeability, one can solve appropriate local problems on the coarse regions to get the velocity basis. Combining the local solutions in all coarse regions for each permeability configuration, we obtain a dictionary of basis functions which can be used to approximate the solutions of flow equations for different source terms. As for the saturation solution, we again use the saturation solutions described before as snapshots, and perform POD on the snapshot space. The dictionary for saturation approximation consists of all POD bases. It can be used to seek for solutions in later time steps given different source terms. We will then design suitable neural networks to learn the coefficients of the solutions in the reduced order spaces which are spanned by the bases of the prior dictionaries. However, due to the predefined dictionaries provide high-dimensional spaces, and only parts of basis functions are needed in the solution representation. We aim to reduce the solution space by an adaptive sparse learning approach. The idea is to first adopt the soft-thresholding technique to enforce the sparsity of the coefficient vectors learned from the neural network. Next, the network connections are pruned according to the sparsity of the coefficient output. This procedure helps to get rid of the less important basis functions in the solution representation and simplify the connections in the network architecture. However, if too many basis functions are dropped, the approximation space will not be sufficient to produce good approximations. We will further add some bases back through a greedy algorithm to enrich the approximation space, and the corresponding network connections will be reactivated simultaneously. The number of bases one would like to include in the approximation space can be fixed in advance, or the accuracy of the approximation can be prespecified. By an adaptive learning process, we expect to achieve a good basis selection and accurate approximation. Moreover, in our network architecture, submodules are designed to approximate the map from input to the first time step, and from previous time steps to later time steps. The final network is the composition of several submodules, and we are learning the entire dynamics, i.e, the solutions at all time steps, simultaneously. The loss functions are designed to minimize the differences between the downscaled reduced order solutions and fine scale solutions over all time steps.

\noindent The main contributions of our work are:
\begin{itemize}
\item Network functionality. We design a neural network architecture with an adaptive sparse learning algorithm, which can be applied to learn the map in the physical problem from the source term to the expansion coefficients of the multiscale basis in the solutions at many time steps. It learns the dynamics and identifies the sparse patterns simultaneously. 

\item Adaptivity. The sparsity of the basis expansion coefficient is enforced via soft thresholding. It can remove a large number of less important degrees of freedom during the training. On the other hand, to improve the accuracy, we can add some overdropped degrees of freedom back adaptively based on a greedy process. We observe that our proposed method achieves better accuracy compared to the projection solutions computed using the basis selected from the standard greedy algorithm/POD algorithm in some cases. 

\item Interpretability. Besides the accuracy benefits, the proposed adaptive learning algorithm can discover an active set of bases and select important degrees of freedom for the quantities of interest. We show that the sparsity patterns of the network output are potentially interpretable in some applications. 

\end{itemize}

\noindent The paper is organized as follows. In Section \ref{sec:prelim}, we describe the preliminaries of the model problem. The main methodology is discussed in Section \ref{sec:method} where we show the detailed construction of dictionary and the main algorithm. The numerical tests are demonstrated in Section \ref{sec:numerical_ex} to illustrate the capability of the proposed network. The numerical results show the efficiency and accuracy of our method. A conclusion is presented in Section \ref{sec:conclusion}.

\section{Problem Setup}\label{sec:prelim}

We consider the problem 
\begin{equation}
    \mathcal{L} u = f
\end{equation}
where $\mathcal{L}$ is a nonlinear time-dependent differential operator which contains multiscale features.

 We would like to seek the solution in an $N$ dimensional space 
 $$V_H = \text{span} \{\phi_1, \phi_2, \cdots, \phi_N \},$$ where $\phi_i$-s are precomputed snapshot bases. Denote by $u^j$ the solution at time step $j$, and suppose it has a sparse representation in this space, that is
\begin{equation}\label{eq:u_rep}
    {u}^j = \sum_{i=1}^N c_i^j \phi_i
\end{equation}
where $\textbf{c}^j = [c_1^j, \cdots, c_N^j]^T$ are sparse vectors.

Let $\mathcal{N}(\cdot;\Theta)$ be a deep neural network parameterized by $\Theta$, with given different realizations of source term $f$, we aim to use $\mathcal{N}$ to approximate the physical process, and realize sparse learning at the same time, that is 
\begin{equation}
    \{u^n\} \approx \mathcal{N}(f;\Theta,\phi_1, \phi_2, \cdots, \phi_N). 
\end{equation}
for all time step $n$.

\section{Methodology} \label{sec:method}
In this section, we will first present preliminaries for the problem of interest and the snapshot basis construction methods. Then we introduce our DNN architecture and training algorithm.

\noindent We consider two-phase incompressible flow problem in heterogeneous porous media. The flows follow Darcy’s law, and we neglect the capillary pressure and gravity effects in the model. The flow equation can be written as
\begin{equation}\label{eq:vel_2ph}
\begin{aligned}
u   = - \lambda(S) \kappa \nabla p \quad \quad &\text{in}  \quad \Omega\\
\text{div}  (u) = r \quad \quad &\text{in}  \quad \Omega\\
u\cdot n = 0 \quad \quad &\text{on}  \quad \partial \Omega
\end{aligned}
\end{equation}
where $\kappa$ is the absolute permeability field. The total mobility
\[
\lambda(S) = \displaystyle{\frac{\kappa_{rw} (S)}{\mu_w} +  \frac{\kappa_{ro} (S)}{\mu_o}}
\]
and $\kappa_{rw}$, $\kappa_{ro}$ are the relative permeability, $\mu_w$ is the viscosity for water, $\mu_o$ is the viscosity of oil. In real applications, $\kappa_{rw}$, $\kappa_{ro}$ nonlinearly depends of $S$. 


With a simplified notation, we abbreviate $S_w$ to be $S$ for the water phase . The saturation equation of $S$ reads
\begin{equation}\label{eq:sat}
\displaystyle{ \frac{\partial S}{\partial t} + u \cdot \nabla f(S) = q}
\end{equation}
where $f(S) =\displaystyle{ \frac{\kappa_{rw}(S)/ \mu_w }{\kappa_{rw}(S)/\mu_w+\kappa_{ro}(S)/ \mu_o} }$, and $q$ is the source term.

The saturation solution can be computed using the finite volume method on the fine grid, and a backward Euler scheme can be used for the time discretization. For each fine block $T_i$, the solution $S_i$ at time step ${n+1}$ can be obtained by
\begin{equation} \label{eq:sat_2ph}
\displaystyle{ S_i^{n+1} = S_i^{n} + \frac{\text{d} t}{|T_i|}  [ -\sum_{e_j \in \partial K_i} F_{ij} (S^{n+1}) + f (S^{n+1}) {q}_i^-  +  {q}_i^+ ] }
\end{equation}
where $q_i^- = \min(0,q_i)$, $q_i^+= \max(0,q_i)$. $e_j$ is the face between fine block $T_i$ and $T_j$. Denote by $u_{ij}$ the velocity on the face $e_j$, then $F_{ij}$ is
\begin{equation} \label{eq:upwind_flux_2ph}
F_{ij}(S^{n+1}) = \left\{
\begin{array}{ll}
\int_{e_j} (u_{ij}^{n+1} \cdot n) f_w(S_i^{n+1}) \quad \text{ if }  \quad u_{ij}^{n+1} \cdot n  \geq  0\\
\int_{e_j} (u_{ij}^{n+1} \cdot n)  f_w(S_j^{n+1}) \quad\text{ if } \quad u_{ij}^{n+1} \cdot n  <  0
\end{array}
\right.
\end{equation}
which is the upwinding flux.

\subsection{Dictionary construction}

We will consider two cases (1) learning the velocity dynamics, (2) learning the saturation dynamics, separately. In the first case, we will apply the mixed GMsFEM method \cite{MixedGMsFEM} to construct the local velocity basis. In the second case, we can use POD to perform global model reduction to obtain snapshots for saturation approximation. Those basis functions constitute the dictionary and will be used in the corresponding training tasks. 

\subsubsection{Local model reduction: Mixed GMsFEM basis for velocity}\label{sec:vel_basis}

Let $S_t$ be the saturation profiles at a few time instances $t = 1, \cdots, T$, obtained by solving the problem with a specific configuration of $f$. Then for each $S_t$, we compute the mobilities $\lambda(S_t)$ and use
\begin{equation*}
    \tilde{\kappa}_t : = \lambda(S_t)\kappa
\end{equation*} as different permeability profiles to compute basis functions. 

\noindent Denote by $\mathcal{T}_H$ the coarse grid of the computational domain $\Omega$. Let $\mathcal{E}^{H}$ be the set containing all coarse scale edges on the grid. In mixed GMsFEM, define the local region $\boldsymbol{\omega}_{i}$ as 
\begin{equation*}
      {\omega}_{i} = 
     \begin{cases}
      K_i^+ \cup K_i^-  & \text{ if } E_i\in \mathcal{E}^{H} \backslash \partial \Omega \\
      K_i  & \text{if } E_i \in \partial \Omega
    \end{cases} 
\end{equation*}%
which is a union of two coarse grid blocks sharing the edge $E_i$, with $i = 1, \cdots, N_e$, and $N_{e}$ is the total number of coarse edges. The basis functions for the velocity fields are constructed for each ${\omega}_{i}$.

\noindent To begin with, one constructs the snapshot space by solving local problems with a set of boundary conditions on ${\omega}_{i}$ associated with a coarse edge. The normal traces of each basis with respect to the coarse edge are resolved up to the fine level. Specifically, denote by $E_{i} = \bigcup^{L_i}_{j=1} e_{j}$, where $e_{j}$ is a fine edge on $E_i$, $L_{i}$ is the number of fine edges on $E_i$.
For each $j=1,2,\cdots, L_i$, we seek for local solutions $\psi_j^{\omega_i}$ by solving
\begin{equation*}
\begin{aligned}
    \tilde{\kappa}_t^{-1} \psi_{j,t}^{\omega_i} + \nabla p_{j,t}^{\omega_i} &= 0 &&\mbox{in } \omega_{i}, \\
\mbox{div} (\psi_{j,t}^{\omega_i} ) &= \alpha_j^{\omega_i} &&\mbox{in } \omega_{i}  \\
\psi_{j,t}^{\omega_i}  \cdot \boldsymbol{n}_{i} &= \delta^{\omega_i}_{j}  &&\mbox{on }  \partial \omega_i 
\end{aligned}
\end{equation*}
where $\delta^{\omega_i}_{j}$ is defined by
\begin{equation*}
\delta^{\omega_i}_{j} =
\begin{cases}
1 &\mbox{on } e_{j}, \\
0 &\mbox{on } \partial \omega_{i}\backslash e_{j},
\end{cases}
\end{equation*}
with $\boldsymbol{n}_i$ is the outward normal unit vector on $E_i$. Moreover, $\alpha_{i,j}$ satisfies the compatibility condition 
\begin{equation*}
    \int_{\omega_i} \alpha_j^{\omega_i} = \int_{\partial \omega_{i}} \psi_j^{\omega_i}  \cdot n_{i}
\end{equation*}
At this point, we obtain the snapshot space 
\begin{equation*}
V_{\text{snap},t}^{\omega_i} = \text{span} \{  \psi_{j,t}^{\omega_i}, \;\; j = 1, \cdots, L_i \},
\end{equation*}
where $i = 1,\cdots, N_{\omega}, \;\; t = 1,\cdots, T$, and $N_{\omega}$ is the number of coarse edges in the computational domain.

Next, one needs to propose a local spectral problem to perform model reduction for each $V_{\text{snap},t}^{\omega_i}$. The problem is to find eigenvalues $\lambda$ and eigenfunctions $v \in V_{\text{snap},t}^{\omega_i}$ such that
\begin{equation*}
a_i(v, w) = \lambda s_i(v, w), \qquad \forall w \in V_{\text{snap},t}^{\omega_i},
\end{equation*}
where $a_i$ and $s_i$ are symmetric positive definite bilinear operators.
As shown in \cite{MixedGMsFEM}, we can let
\begin{equation}\label{eq:spectral}
\begin{aligned}
a_i(v, w) &= \int_{E_i} {\kappa}_t^{-1} (v\cdot n_i) (w\cdot n_i), \\
s_i(v, w) &= \int_{\omega_{i}} {\kappa}_t^{-1} v \cdot w +\text{div}(v)\text{div}( w).
\end{aligned}
\end{equation}
Denote by $(\lambda_{j,t}^{\omega_i}, \phi_{j,t}^{\omega_i})$ be the eigen-pairs solved from \eqref{eq:spectral}, where the eigenvalues are sorted in an ascending order. Then the first $l_i$ dominant modes are selected to form the offline space $V_{\text{off},t}^{\omega_i}$. Finally, we take the union of all $V_{\text{off},t}^{\omega_i}$ as our dictionary. That is,

\begin{equation} \label{eq:dic_velocity}
    \mathcal{D}_{\text{vel}} = \{\phi_{j,t}^{\omega_i}, \;\; j = 1,\cdots, l_i;\;\; i = 1,\cdots, N_{\omega}; \;\; t = 1,\cdots, T \}.
\end{equation}

\subsubsection{Global model reduction of the saturation equation: POD basis construction}
In another perspective, we would like to learn saturation profiles and consider velocity as some hidden variables. Again, given some specific configuration of the source term $f$, we solve the system \eqref{eq:vel_2ph}-\eqref{eq:sat_2ph} sequentially. Denote by $S_t$ be the saturation profiles at a few time instances $t = 1, \cdots, T_0 \leq T$. These functions $\Phi = [S_1, \cdots, S_{T_0}]$ form the snapshot space, and we will perform the proper orthogonal decomposition (POD) on it. To be specific, one performs SVD on $\Phi$,
\begin{equation*}
    \Phi = V \Lambda W^T 
\end{equation*}
where $\Lambda$ is a diagonal matrix with singular values of $\Phi$, $V$ and $W$ are the left and right singular matrices. Arrange the singular values in a decreasing order $\sigma_1 \geq \sigma_2 \geq\sigma_{T_0}$, one can then choose the corresponding first few singular vectors in $V$ which capture the important modes in the dynamic process. Let $\phi_j, \; j = 1, \cdots, m$ be the vectors we chose, the POD space for saturation is then 
\begin{equation}\label{eq:dic_saturation}
\begin{aligned}
    V_{\text{sat}} &= \text{span} \{\phi_j, \; j = 1, \cdots, m\}\\
    \mathcal{D}_{\text{sat}} &= \{\phi_j, \; j = 1, \cdots, m\}
\end{aligned}
\end{equation}
and we will use it as our dictionary for the approximation of saturation solutions. For a newly given configuration of the source term, one can seek $S^\text{red}_t \approx S_t$ in the POD space.

\subsection{Network ingredients}\label{sec:dnn_str}
In this section, we present the main ingredients in our network architecture.

\noindent \textbf{Inputs and labels}: We consider a two-dimensional input $f\in \mathbb{R}^{d \times d}$ which can be arbitrary source terms in the equation, and a set of labels $\left(y^1; \cdots; y^T \right)$, where $T$ is the total number of time steps, and $y^j \in \mathbb{R}^n$. Here the labels $y^j$ can be velocity fields or saturation profile at time step $j$. 

\noindent \textbf{Network outputs}: The output of the network is denoted by $\left(\boldsymbol{c}^1, \cdots, \boldsymbol{c}^T \right)$,  where each $\boldsymbol{c}^j=(c_1^j, \cdots c_N^j)^T$ is a solution coefficient vector at time step $j$, with $\boldsymbol{c}^j \in \mathbb{R}^N$.

\noindent \textbf{Network architecture}: For the neural network $\mathcal{N}$, we will divide it into $T$ submodules
\begin{equation}
    \mathcal{N} = \mathcal{N}_T \circ \mathcal{N}_{T-1} \circ \cdots  \mathcal{N}_{1}
\end{equation}

\noindent For the first submodule, we aim to learn a map $\mathcal{N}_{1}$ from input $f$ to $\boldsymbol{c}_1$. Let $m$ be the number of layers in $\mathcal{N}_{1}$, which consists of some convolutional layers, an average pooling layer, and fully connected layers, that is 
\begin{equation*}
    \mathcal{N}_{1}: = S_{\gamma_1} \circ L_{1,m} \circ \sigma \circ  L_{1,m-1} \circ  \sigma \circ  L_{1,1}.
\end{equation*}
Let $K_j$ be an appropriate pooling kernel or convolution kernel,
\begin{equation}
\begin{aligned}
    &L_{1,j} (\boldsymbol{x}) = K_j * \boldsymbol {x}, \quad j = 1, \cdots, m-2\\
    &L_{1,m-1}(\boldsymbol{x})  = \boldsymbol{W}_{1, m-1} \left( \text{vec} (\boldsymbol{x}) \right) + \boldsymbol {b}_{1, m-1} \\
    &L_{1,m} (\boldsymbol{x}) = \boldsymbol{W}_{1,m} \boldsymbol{x}.  
\end{aligned}
\label{eq:module_1}
\end{equation}
Moreover, we write the intermediate output from $\mathcal{N}_{1}(f)$ as $\boldsymbol{c}^1$. 

\noindent For the other submodules $\mathcal{N}_t, \quad t=2,\cdot, T$, we have
\begin{equation}\label{eq:N_t}
    \mathcal{N}_{t}: = S_{\gamma_t} \circ L_{t, m} \circ \sigma \circ  L_{t, m-1} \circ  \sigma \circ  L_{t, 1}
\end{equation}
where

\begin{equation}
\begin{aligned}
    &L_{t,j} (\boldsymbol{x})  = \boldsymbol{W}_{j, m-1} \boldsymbol{x}  + \boldsymbol {b}_{j, m-1} \\
    &L_{1,m} (\boldsymbol{x}) = \boldsymbol{W}_{1,m} \boldsymbol{x},   
\end{aligned}
\label{eq:module_t}
\end{equation}
 
\noindent Here, $\sigma$ is a nonlinear activation function, for example, leaky RELU, which is defined as
\begin{equation}
   \sigma (\boldsymbol{x}) = \begin{cases}
      x & \text{if $t>0$}\\
      \alpha x & \text{otherwise}
    \end{cases} 
\end{equation}
for some constant $\alpha \in (0,1)$.

\noindent $S_{\gamma_t}$ is a soft-thresholding function, defined as
\begin{equation}\label{eq:soft_thres}
     S_{\gamma_t}(\boldsymbol{x}) = \text{sign}(\boldsymbol{x}) (|\boldsymbol{x}|-\gamma_t)_+
\end{equation}
with some constant threshold $\gamma_t$. The soft-thresholding function will help us to enforce sparsity on the predicted solution coefficient vectors $\boldsymbol{c}$.

\noindent Similarly, we denote by $\boldsymbol{c}^t$ the intermediate output from $\mathcal{N}_{t}(f)$.

The architecture of the network can be illustrated as in Figure \ref{fig:network}.
\begin{figure}[!hbt]
	\centering
	\includegraphics[scale=0.6]{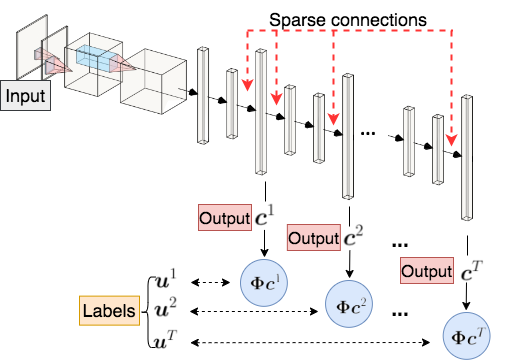}
	\caption{An illustration of the network architecture.The network's input is the realization of a random source, the outputs contain the prediction of expansion coefficients $\textbf{c}_j$ for multiscale basis in the solutions at multiple time steps ($j=1, \cdots, T$). The basis funtions $\boldsymbol{\Phi}$ are incorporated in the loss function.}
	\label{fig:network}
\end{figure}

\subsection{Loss function with basis functions}

Given a set of training pairs \\
$\{ f_k, \left(y^1_k, \cdot, y^n_k\right) \}$, our goal is then to find $\Theta^*$ for the network $\mathcal{N}(\cdot;\Theta)$ by solving an optimization problem
\begin{align*}
\Theta^* &= \text{argmin}_{\Theta} \mathcal{L}\left(\mathcal{N}(f;\Theta); \{y^j\}_{j=1}^T \right) \\
&= \text{argmin}_{\Theta} \frac{1}{K}\sum_{k=1}^{K} \sum_{j=1}^{T}  ||y^j_k - \Phi \boldsymbol{c}^j_k ||^2_2, \numberthis \label{eq:loss}
\end{align*}
where $\Phi = \left[ \phi_1 \quad  \phi_2 \quad  \cdots \quad  \phi_N \right]$ is the matrix formed by the precomputed bases. $K$ is the number of the samples, $N$ is the number of bases in the dictionary. We apply the preconditioned SGD to solve the optimization problem in \eqref{eq:loss}.

\subsection{Adaptive sparse learning algorithm}
In this section, we will propose our main algorithm, the adaptive sparse learning algorithm. The algorithm consists of two parts: removing basis/pruning connections (as illustrated in \ref{sec:sparse_coef} and \ref{sec:sparse_net}), and adding basis/reactivating connections (as illustrated in \ref{sec:add_basis}).

\subsubsection{To reduce model order with sparse output coefficients}\label{sec:sparse_coef}
To ensure the sparsity of the the model output, we apply the soft thresholding function \eqref{eq:soft_thres}, which can be further written as 
\begin{equation}
     S_{\gamma_t}(\boldsymbol{x}) =  \begin{cases}
      x-\gamma_t & \text{if $x\geq \gamma_t $}\\
      0 & \text{if $- \gamma_ <x< \gamma_t $}\\
      x+\gamma_t & \text{if $x\leq -\gamma_t $}. 
    \end{cases} 
\end{equation}
\noindent After the action of $S_{\gamma_t}$, we obtain the output coefficients $\boldsymbol{c}^t$. The soft thresholding function will cut off those coefficients with small magnitudes. Then the sparse coefficient vector will be multiplied by the normalized basis function matrix $\boldsymbol{\Phi}$. This procedure results in a removal of some unimportant basis functions during the training.

We remark that the soft-thresholding process is commonly involved in $l_{1}$ minimization algorithms. One of the ways to view this process is that the soft-thresholding function defines an active set in the optimization process. During the training, we observe that the gradient vector corresponding to the $i$th entry in the soft thresholding layer will vanish once the coefficient output in that layer is smaller than the thresholding parameter.

\subsubsection{Pruning network connections}\label{sec:sparse_net}
\noindent In our framework, we also want to enforce sparsity on the network connections based on the sparse pattern of intermediate network outputs $\boldsymbol{c}^t$. At this point, denote by $\Theta = \Theta_s \cup \Theta_d $ the network parameters, where 
\begin{equation*}
    \Theta_s = \{ \boldsymbol{W}_{t,m}, \boldsymbol{W}_{t+1,1}, \;\; \text{ for all } t = 1, \cdots, T-1 \}
\end{equation*}
corresponds to the parameters in the sparse layer, and $\Theta_d$ corresponds to the parameters in the rest of layers. The weight matrix $\boldsymbol{W}$ can be referred to \eqref{eq:module_1} and \eqref{eq:module_t}.

\noindent We would like to cut connections to and from $\boldsymbol{c}^t$, based on the magnitude of $\boldsymbol{c}^t_j$ during the training. To be specific, if at the current training epoch, the $\boldsymbol{c}^t_{j,k} \leq \gamma_t^{\text{remove}}$ (one can just take $\gamma_t^{\text{remove}} = 0$) for the $k$-th entry, then we will let
\begin{equation*}
   \boldsymbol{W}_{t,m}(k,:) = 0, \;\;  \boldsymbol{W}_{t+1,1}(:,k) = 0.
\end{equation*}

\subsubsection{Adaptively enrich the solution space}\label{sec:add_basis}

\noindent During the training, the sparse pruning procedure may overly drop some components, one can then adaptively add the basis back to the training by reactivating some previously pruned network connections. To illustrate the idea, denote by $y_{\text{true}} \in \mathbb{R}^{n\times B_n}$ the reference solutions, $y_{\text{pred}} \in \mathbb{R}^{n\times B_n}$ the corresponding predictions from the neural network, where $B_n$ is the number of samples in a batch. Let $P_u \in\mathbb{R}^{m_u \times n}$ be the matrix containing the unselected bases in the current stage, where $m_u$ is the number of bases left in the pool. One first computes the differences between the true and prediction solutions
\begin{equation*}
   R^t =  y^t_{\text{true}} -  y^t_{\text{pred}}.
\end{equation*}
One then computes the inner product of the error $R^t$ with the unselected basis
\begin{equation}\label{eq:Et}
   E^t = P_u R^t.
\end{equation}

\noindent Since a batch of samples is used when computing $R^t$, we need to compute the absolute mean of $E^t$ across these samples, denoted by $\bar{E^t}$.  We then sort the absolute mean $\bar{E^t}$ in a descending manner. Denote by $[i^t_1, i^t_2, \cdots, i^t_{m_u}]$ the sorted indices which specifies how the elements of the absolute mean of $E^t$ were rearranged.  Recall that the total number of bases in the dictionary is $N$, then the current number of selected bases is $N-m_u$.

\noindent There are two ways to add degrees of freedom back. 
\begin{itemize}
    \item Fix the number of bases. The first approach is to set a target $M$ as the total number of bases we want to include in the final solution representation. Then,  if $N-m_u < M$, we will select the first $m_c = M - (N-m_u)$ bases corresponds to $[i^t_1, i^t_2, \cdots, i^t_{m_c}]$, add them back in the solution representation, and remove them from the unselected pool $P_u$.
    \item Fix a threshold parameter. The second approach is to set the thresholding parameters $\delta_{\text{add}}$. That is, 
if $[\bar{E^t}]_{j} <\gamma_t^{\text{add}}$, then we let $m_c = j$ and add bases correspond to the columns with indices $[i^t_1, i^t_2, \cdots, i^t_{m_c}]$, and remove them from the pool $P_u$.
\end{itemize} 

\noindent If one fixes the target number of bases in advance, it can have a control on the number of bases selected during the training process and produce a desired dimension for the reduced order model. This works for the case when we have an approximate bound for the number of important bases. More generally, without the knowledge of the exact number of bases are needed, we usually want to control the accuracy of the approximation. Then we can use the thresholding parameter $\gamma_t^{\text{add}}$ to determine the number of bases to add in the process. The relationship between the threshold $\gamma_{t}^{add}$ and the error is presented by the following lemma.

\begin{lemma}
Let $\gamma_{t}^{add}$ be a given threshold for adding bases. Assume
the iteration process is converged to a static state where no basis
will be added to the system in the iteration process. For any unselected basis
$\phi\in\mathbb{R}^{d}$, we have 
\begin{equation}
\cfrac{1}{B_{n}} \sum_{i=1}^{B_n} \|y_{true,i}^{t}-y_{pred,i}^{t}\|_{2}-\cfrac{1}{B_{n}} \sum_{i=1}^{B_n} \min_{(c_{1},\dots c_{B_{n}})\in\mathbb{R}^{B_{n}}}\|y_{true,i}^{t}-y_{pred,i}^{t}- \phi c_{i} \|_{2}<\gamma_{t}^{add}
\end{equation}
where $y_{true,i}^{t}$ and $y_{pred,i}^{t}$ are the $i$-th
column of $y_{true}^{t}$ and $y_{pred}^{t}$, and $i=1,\cdots, B_n$, $B_n$ is the batch size. 
\end{lemma}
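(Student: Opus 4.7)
The plan is to reduce the inequality to a one-basis best-approximation estimate for each sample, then invoke the convergence hypothesis on the inner-product based criterion used in the add-basis step of Section~\ref{sec:add_basis}. I would assume throughout that the dictionary bases are normalized, which the paper explicitly states (``the sparse coefficient vector will be multiplied by the normalized basis function matrix $\boldsymbol{\Phi}$''), so that $\|\phi\|_2=1$ for any unselected basis $\phi$.

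First I would handle the inner minimization. For a fixed $i$, writing $R_i^t = y_{true,i}^t - y_{pred,i}^t$, the scalar minimization
\[
\min_{c_i \in \mathbb{R}} \|R_i^t - \phi c_i\|_2
\]
is solved by $c_i^\ast = \langle \phi, R_i^t\rangle$, and the minimum value equals $\sqrt{\|R_i^t\|_2^2 - \langle \phi, R_i^t\rangle^2}$. The elementary scalar inequality $a - \sqrt{a^2 - b^2} \le |b|$ (valid whenever $a \ge |b| \ge 0$, and immediate from squaring) then gives
\[
\|R_i^t\|_2 - \min_{c_i}\|R_i^t - \phi c_i\|_2 \;\le\; |\langle \phi, R_i^t\rangle|.
\]
Averaging over $i = 1,\dots,B_n$ yields the key intermediate bound
\[
\frac{1}{B_n}\sum_{i=1}^{B_n}\|R_i^t\|_2 - \frac{1}{B_n}\sum_{i=1}^{B_n}\min_{c_i}\|R_i^t - \phi c_i\|_2 \;\le\; \frac{1}{B_n}\sum_{i=1}^{B_n}|\langle \phi, R_i^t\rangle|.
\]

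Next I would connect the right-hand side to the quantity $\bar{E^t}$ used in the add-basis criterion \eqref{eq:Et}. Since $E^t = P_u R^t$, the entry of $E^t$ corresponding to the unselected basis $\phi$ and sample $i$ is exactly $\langle \phi, R_i^t\rangle$, so the absolute mean of that row of $E^t$ across the batch equals $\frac{1}{B_n}\sum_i |\langle \phi, R_i^t\rangle|$. The convergence hypothesis, that the procedure has reached a state in which no further basis is added under the threshold rule of Section~\ref{sec:add_basis}, means precisely that this quantity is strictly less than $\gamma_t^{add}$ for every unselected $\phi$. Chaining the two inequalities gives the claim.

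The main obstacle is essentially interpretive rather than technical: one must commit to the convention that ``absolute mean'' in the paper is the mean of absolute values (so that it really equals $\frac{1}{B_n}\sum_i|\langle \phi, R_i^t\rangle|$) and that the unselected bases are unit-normalized. If instead the mean were taken before the absolute value, an extra triangle-inequality step would be needed to pass from $|\frac{1}{B_n}\sum_i \langle\phi, R_i^t\rangle|$ up to $\frac{1}{B_n}\sum_i|\langle\phi, R_i^t\rangle|$, which goes the wrong way, so I would state the normalization and the absolute-mean convention explicitly at the start of the proof. Everything else is the two-line calculation above plus an averaging step.
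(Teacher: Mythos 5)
Your proof is correct and takes essentially the same route as the paper's: the paper likewise uses $\|\phi\|_2=1$, bounds $\|R_i^t\|_2-\|R_i^t-\phi c_i\|_2$ above by $|\phi^T R_i^t|$ via the orthogonal projection $\phi\phi^T R_i^t$ (which is exactly your closed-form minimizer $c_i^\ast=\langle\phi,R_i^t\rangle$ and the inequality $a-\sqrt{a^2-b^2}\le|b|$), averages over the batch, and invokes the no-add condition $[\bar{E}^t]_j<\gamma_t^{add}$. Your reading of the absolute-mean convention also matches the paper's usage, so no further adjustment is needed.
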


\begin{proof}
If no bases are added to the system in the iteration process, we have
\begin{equation}
[\bar{E}^{t}]_{j}<\gamma_{t}^{add}\;\;\;\;\; \forall j=1, \cdots, m_u; \;\; t = 1, \cdots, T.
\end{equation}
where $\bar{E}^{t}\in \mathbb{R}^{m_u}$ is the mean of $E^t$ over all samples as defined in \eqref{eq:Et}. Thus, for any unselected basis
$\phi\in\mathbb{R}^{d}$
\begin{equation}
\sum_{i=1}^{B_n}|\phi^{T}(y_{true,i}^{t}-y_{pred,i}^{t})|<\gamma_{t}^{add}.
\end{equation}
Since $\phi^{T}\phi=\|\phi\|_{2}^2=1$, we have 
\begin{equation*}
\begin{aligned}
&\|y_{ture,i}^{t}-y_{pred,i}^{t}\|_{2}-|\phi^{T}(y_{true,i}^{t}-y_{pred,i}^{t})|\\ 
& \leq\|y_{ture,i}^{t}-y_{pred,i}^{t}-\phi\phi^{T}(y_{ture,i}^{t}-y_{pred,i}^{t})\|_{2}\\
 & \leq\|y_{ture,i}^{t}-y_{pred,i}^{t}-\phi c_{i}\|_{2}
\end{aligned}
\end{equation*}
for any $c_{i}\in\mathbb{R}$. Therefore, we obtain 
\begin{equation*}
\begin{aligned}
& \cfrac{1}{B_{n}}\sum_{i=1}^{B_n} \Big(\|y_{ture,i}^{t}-y_{pred,i}^{t}\|_{2}-\|y_{ture,i}^{t}-y_{pred,i}^{t}-\phi c_{i}\|_{2}\Big)\\
& \leq\cfrac{1}{B_{n}}\sum_{i=1}^{B_n} |\phi^{T}(y_{true,i}^{t}-y_{pred,i}^{t}| <\gamma_{t}^{add}
\end{aligned}
\end{equation*}
for any $(c_{1},\dots c_{B_{n}})\in\mathbb{R}^{B_{n}}$. This completes the proof.
\end{proof}
By this lemma, we can see that using the threshold $\gamma_{t}^{add}$ to control the number of bases can give us a control on the error, in the sense that adding any unselected basis can only make a small improvement to the average $l_2$ error using the training data set.

\noindent Furthermore, the corresponding rows or columns in the weight matrix will be reactivated,
\begin{equation*}
   \boldsymbol{W}_{t,m}(j,:) =\boldsymbol{z}, \;\;  \boldsymbol{W}_{t+1,1}(:,j) = \boldsymbol{z},\;\;  j = i^t_1, i^t_2, \cdots, i^t_{m_c}
\end{equation*}
where $\boldsymbol{z}$ are vectors with i.i.d samples generated from uniform distributions.

\noindent The sparse procedures described above will be performed on an adaptive basis.
Denote by $\boldsymbol{\Theta}_0$ the initial model parameters, $n_b$ the number of burning in steps, $n_e$ the total number of epochs, $\eta$ the learning rate. Suppose we would like to update the sparsity information every other $nn$ steps. Let $\gamma_t^{\text{remove}}$ be the thresholding parameters for removing bases,  $\gamma_t^{\text{add}}$ be the thresholding parameter for adding bases. $I^t_u$ is the unselected bases indices, $I^t_c$ are the indices of bases we would like to add. $M$ is the target number of bases, $N$ is the total number of bases in the dictionary. The algorithm is summarized in Algorithm \ref{alg:sparse}.

\begin{algorithm}
\caption{Adaptive Multiscale Sparse Neural Network for Basis Expansion Learning }\label{alg:sparse}
    \begin{algorithmic}[1]
    \Procedure {AMS-Net}{$\boldsymbol{\Theta}_0$, $n_b$, $n_e$, $m$, $\gamma_t^{\text{remove}}$, $\gamma_t^{\text{add}}$} 
    \ForAll{$i = 1: n_e$}
        \If{$i > n_b$ and $ \mod (i, nn) = 0 $}
            \State $\boldsymbol{c}_t \gets \text{ output from } \mathcal{N}_t$, \text{ where } $\mathcal{N}_t$ \text{ is defined in } \eqref{eq:N_t}
            
            \State $I^t_u \gets  \;\; \{j\} \text{ such that }  \lvert {\boldsymbol{c}^t_j} \rvert  < \gamma_t^{\text{remove}} $
            \State $m_u \gets  \text{length}(I^t_u)$
            \State \text{Set weight parameters } $\big\{ \boldsymbol{W}_{t,m}(I^t_u,:), \; \boldsymbol{W}_{t+1,1}(:,I^t_u)\big\}$  \text{ to zero in } $\boldsymbol{\Theta}_i$ 
           	\State $\bar{E^t} \gets \frac{1}{B_n}\sum_{i=1}^{B_n} |P_u y^t_{\text{true},i} -  y^t_{\text{pred},i}|$
           	\State $I \gets $ descending ordered indices of $\bar{E^t}$
             	\If{ Set target number of bases $M$ and $N- m_u < M$ }
            		\State $m_c \gets M-(N - m_u)$ 
           		 \ElsIf{Set adding threshold $\gamma_t^{\text{add}}$ and $[\bar{E^t}]_{1}<\gamma_t^{\text{add}}$ }
           		 	\State $m_c \gets j$ where $[\bar{E^t}]_{j} < \gamma_t^{\text{add}}$
           	\EndIf 
           	\State $I^t_c \gets I(1:m_c)$
            \State \text{Reactivate weight parameters } $ \big\{\boldsymbol{W}_{t,m}(I^t_c,:), \; \boldsymbol{W}_{t+1,1}(:,I^t_c)\big \}$  \text{ from } $\boldsymbol{\Theta}_i$.
        \EndIf
        
         \State \text{Update } $\boldsymbol{\Theta}_{i+1}$ \text{ by stochastic gradient based algorithms} 
    \EndFor
    \State \textbf{return}  $\boldsymbol{\Theta}_i$
    \EndProcedure
    \end{algorithmic}
\end{algorithm}

\section{Numerical example} \label{sec:numerical_ex}

\subsection{Toy example}
We first test the sparse learning algorithm with a dictionary on a simple static example.
Consider the two-dimensional elliptic equation on $\Omega = [0,1]\times [0,1]$
\begin{equation*}
\begin{aligned}
    -\text{div}(k(u) \nabla u) &= f(x,y; \boldsymbol{\alpha})\\
    u &= 0 \text{ on } \partial \Omega 
\end{aligned}
\end{equation*}
where the ground truth for $u$ is
\begin{equation*}
\begin{aligned}
    u(x,y; \boldsymbol{\alpha}) &= \alpha_1 \sin(\pi x) \sin(\pi y) + \alpha_2 \sin(2\pi x) \sin(2\pi y) \\
    &+ \alpha_3  \sin(3\pi x) \sin(\pi y)+ \alpha_4  \sin(4\pi x) \sin(2\pi y)
\end{aligned}
\end{equation*}
and $\boldsymbol{\alpha} = [\alpha_1, \alpha_2, \alpha_3, \alpha_4]$, with $\alpha_i$ i.i.d samples generated from normal distribution. For $\alpha_1$, we have mean $1$ and standard deviation $2$. For $\alpha_2$, we have mean $0$ and standard deviation $3$. For $\alpha_3$, we have mean $-2$ and standard deviation $3$. For $\alpha_4$, we have mean $5$ and standard deviation $2$,

We create the dictionary $D_{1,2}$ with 
\begin{align*}
    D_1 &= \{ 1,\; \sin(\pi x),  \;\sin(2 \pi x),  \; \cdots,  \;\sin(M\pi x)\}\\
    D_2 &= \{ 1,\; \sin(\pi y),  \;\sin(2 \pi y),  \; \cdots,  \; \sin(M\pi y)\}\\
    D_{1,2} &= \{d_1 d_2  \; |\; d_1 \in D_1, \; d_2 \in D_2\}.
\end{align*}
where $M=9$, thus we have $100$ basis function in the set $D_{1,2}$.

\subsubsection{Linear case}\label{sec:toy_linear}
In the linear case, we take $k(u) = 1$, and
\begin{equation*}
\begin{aligned}
    f(x,y; \boldsymbol{\alpha}) &= 2\pi^2 \alpha_1 \sin(\pi x) \sin(\pi y) + 8\pi^2\alpha_2 \sin(2\pi x) \sin(2\pi y) \\
    &+ 10\pi^2\alpha_3 \sin(3\pi x) \sin(\pi y)+ 20\pi^2 \alpha_4 \sin(4\pi x) \sin(2\pi y)
\end{aligned}
\end{equation*}
We assume the dataset for training is small, and there are $100$ sample pairs. The network to be trained has only a submodule $\mathcal{N}_1$ as described in Section \ref{sec:dnn_str}. Among all samples, $80$ percents are used for training, and $20$ percents are used for testing. In this example, the bases included in the solutions are known exactly, so we only prune the network and remove some bases (the thresholding parameter is chosen to be $0.5$), and do not add basis functions. The purpose is to see whether the adaptive pruning can choose the correct set of basis in the end. The numerical results indicate that, applying the adaptive sparse algorithm, the network can identify the $4$ basis functions which constitute the ground truth $u$ automatically. The number of bases selected during training is presented in Figure \ref{fig:ex1_linear_basis}. In this example, we start the sparse pruning after 100 epochs and record the number of bases every 100 epochs. We can see that the number of bases drops very fast and the network can find the correct set of basis after 1700 epochs. 

\begin{figure}[!hbt]
	\centering
	\includegraphics[scale=0.3]{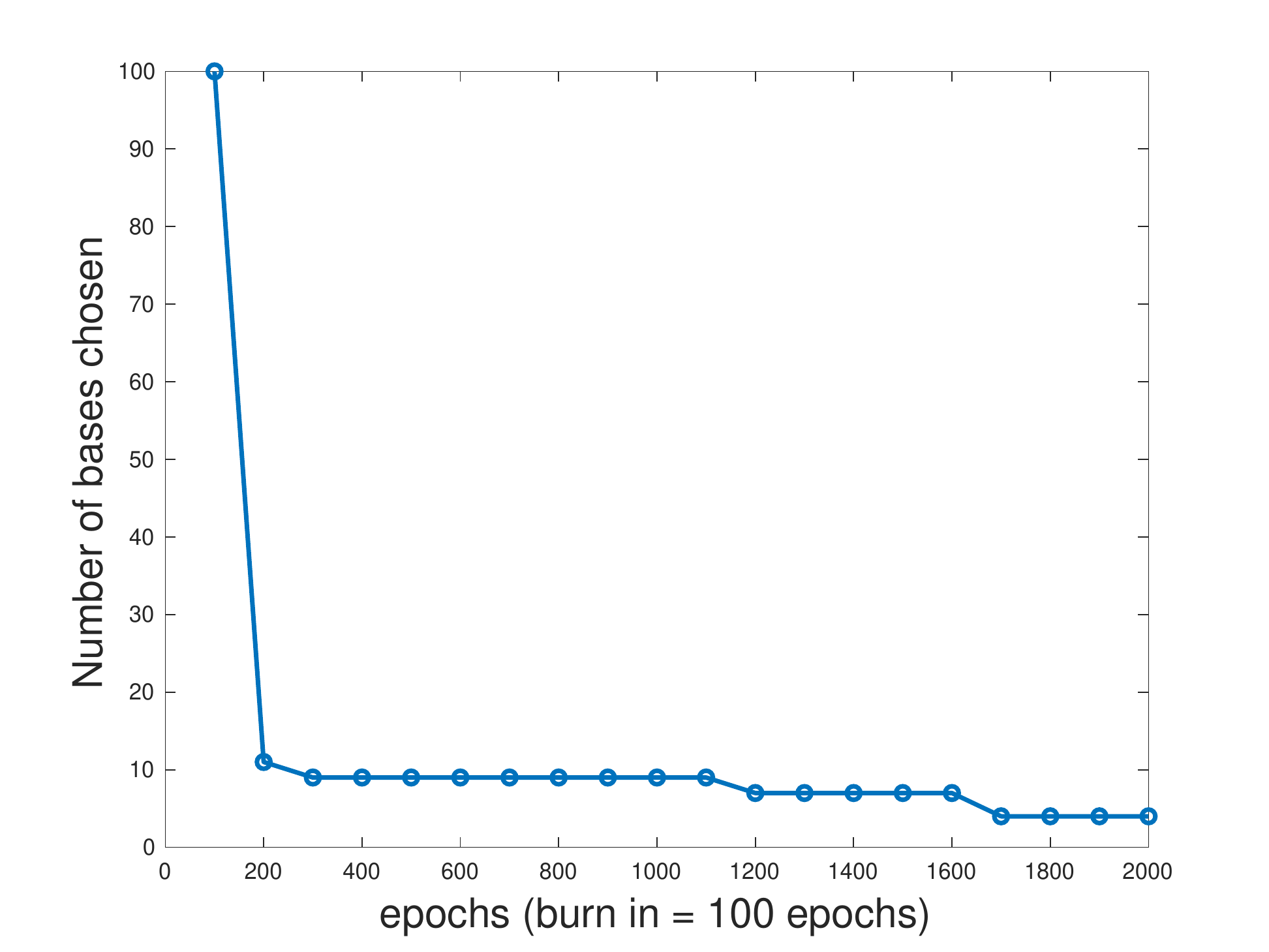}
	\caption{(Section \ref{sec:toy_linear}) Toy example, linear case, the number of selected bases v.s. training epochs. The number of training samples is 80. The total number of basis functions is 100. After 200 epochs, the sparsity reaches 90\%. The network identifies the correct sparsity pattern and finds the true basis functions in epoch 1700.}\label{fig:ex1_linear_basis}
\end{figure}

The training and testing history are presented in \ref{fig:ex1_linear}. We compare the case (1) when we do adaptive pruning during the training (``AMS-net (p)")and (2) do not prune the network during the training. It shows that our proposed adaptive pruning method can achieve faster training and produce better prediction results.

\begin{figure}[!hbt]
	\centering
	\includegraphics[scale=0.3]{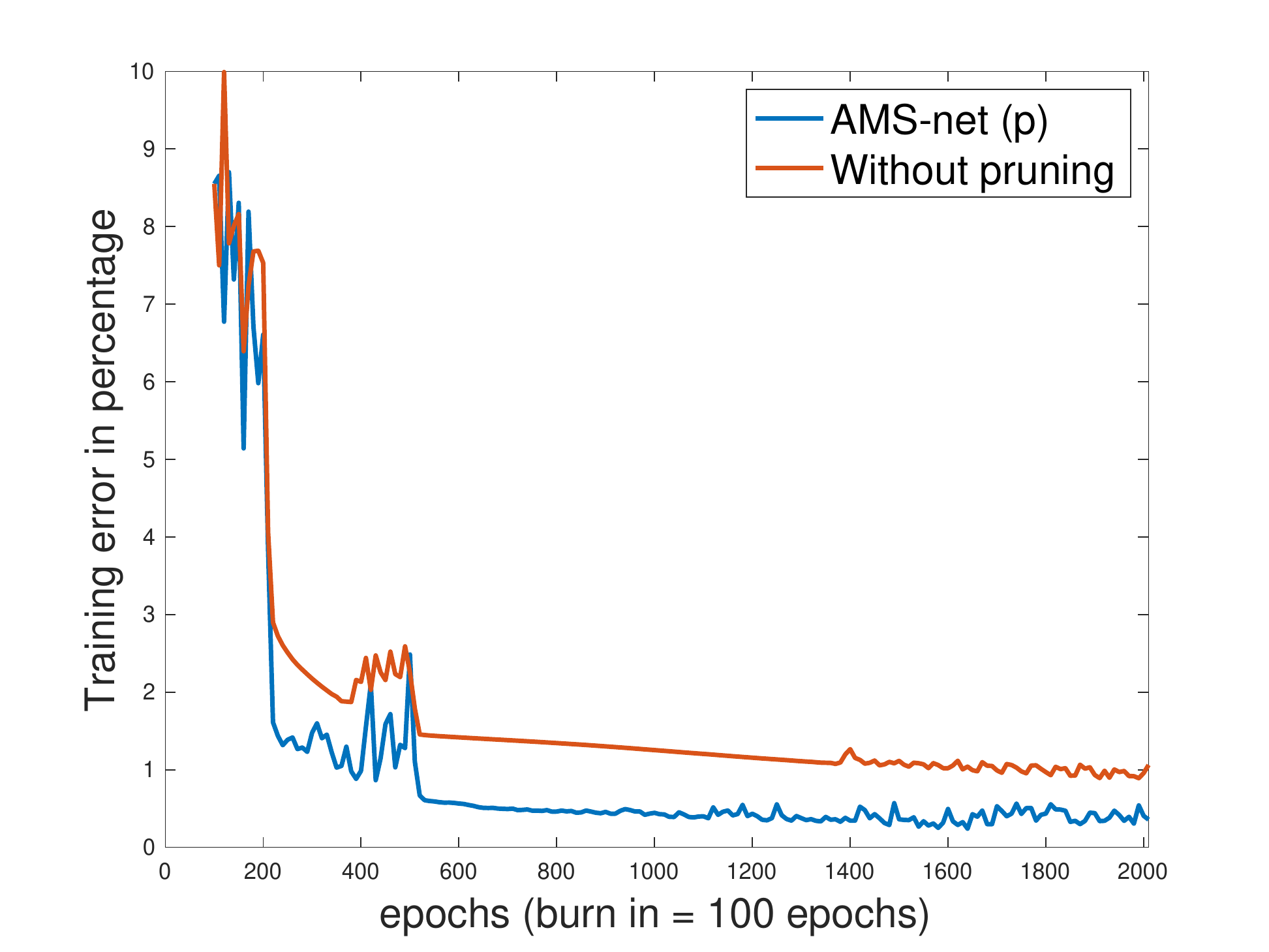}
	\includegraphics[scale=0.3]{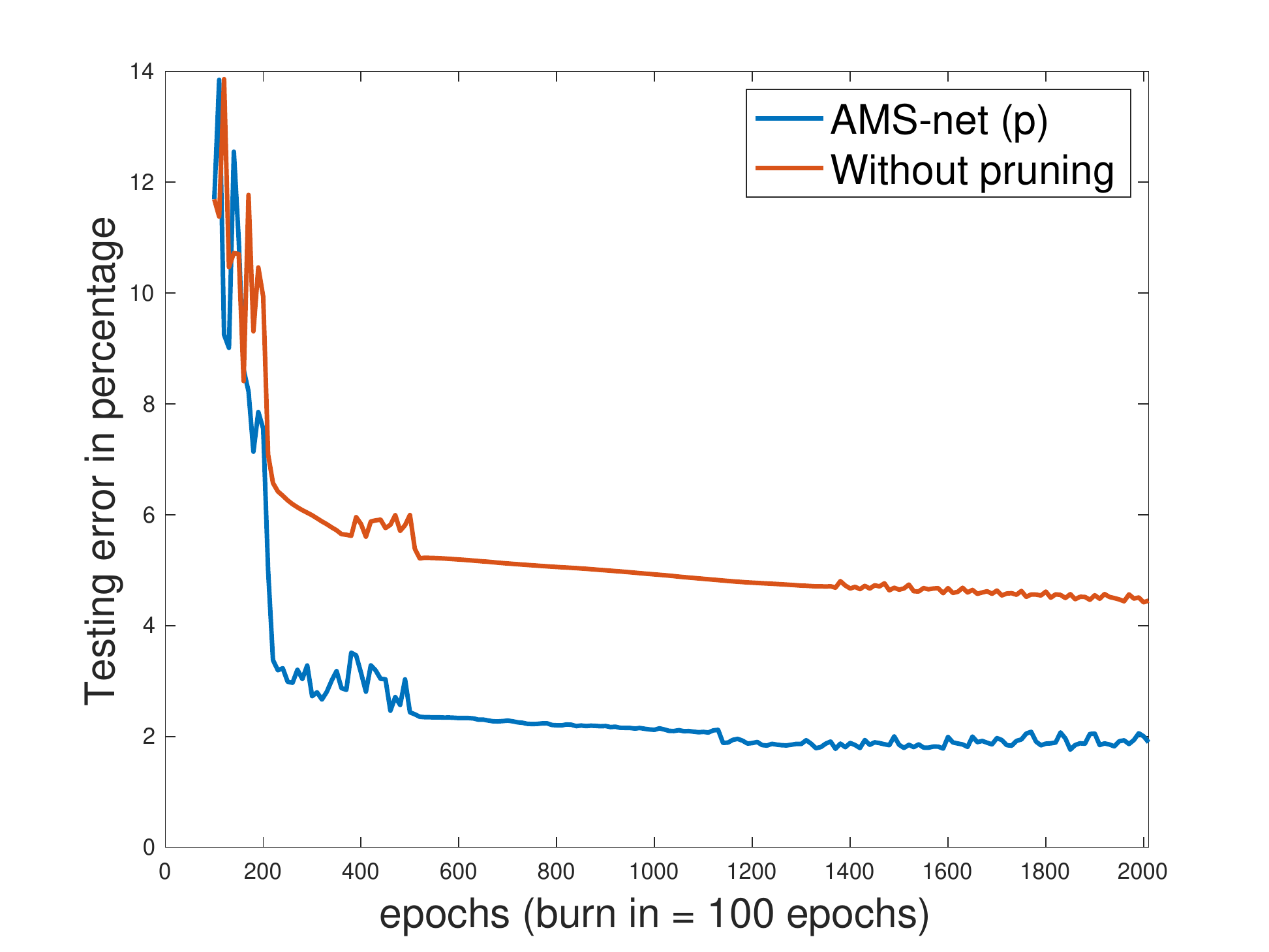}
	\caption{(Section \ref{sec:toy_linear}) Toy example, linear case. Left: training error history, right: testing error history. Comparison between the results using our proposed pruning algorithm and without pruning. The number of training/testing samples is 80/20. It shows that training with pruning is more efficient.}\label{fig:ex1_linear}
\end{figure}


\subsubsection{Nonlinear case}\label{sec:toy_nonlinear}
In the linear case, we take $k(u) = u$, and
\begin{equation*}
\begin{aligned}
    f(x,y; \boldsymbol{\alpha}) &= -( \alpha_1 \cos(\pi x) \sin(\pi y) + 2\pi \alpha_2 \cos(2\pi x) \sin(2\pi y) \\
    &+ 3\pi \alpha_3 \cos(3\pi x) \sin(\pi y)+ 4\pi \alpha_4 \cos(4\pi x) \sin(2\pi y) )^2 \\
        &- ( \alpha_1 \sin(\pi x) \cos(\pi y) + 2\pi \alpha_2 \sin(2\pi x) \cos(2\pi y) \\
    &+ 3\pi \alpha_3 \sin(3\pi x) \cos(\pi y)+ 4\pi \alpha_4 \cos(4\pi x) \sin(2\pi y) )^2\\
    & -( +\alpha_1 \sin(\pi x) \sin(\pi y) + \alpha_2 \sin(2\pi x) \sin(2\pi y) \\
    &+ \alpha_3  \sin(3\pi x) \sin(\pi y)+ \alpha_4  \sin(4\pi x) \sin(2\pi y))\\
    & -(2\pi^2 \alpha_1 \sin(\pi x) \sin(\pi y) + 8\pi^2\alpha_2 \sin(2\pi x) \sin(2\pi y) \\
    &+ 10\pi^2\alpha_3 \sin(3\pi x) \sin(\pi y)+ 20\pi^2 \alpha_4 \sin(4\pi x) \sin(2\pi y))
\end{aligned}
\end{equation*}

In this nonlinear case, we generate more sample pairs, $1000$ in total, to train the neural network. Among them, $800$ samples are used for training, and $200$ samples are used for testing. In this example, similar as before, we only perform pruning without adding. We choose the thresholding parameter to be $0.3$. The number of bases selected during training is presented in Figure \ref{fig:ex1_nonlinear_basis}. Here we record the number of bases every 20 epochs. Again, we observe that the number of bases drops very fast in the beginning of our algorithm, and the network can identify the correct basis sets after 760 epochs. 

\begin{figure}[!hbt]
	\centering
	\includegraphics[scale=0.3]{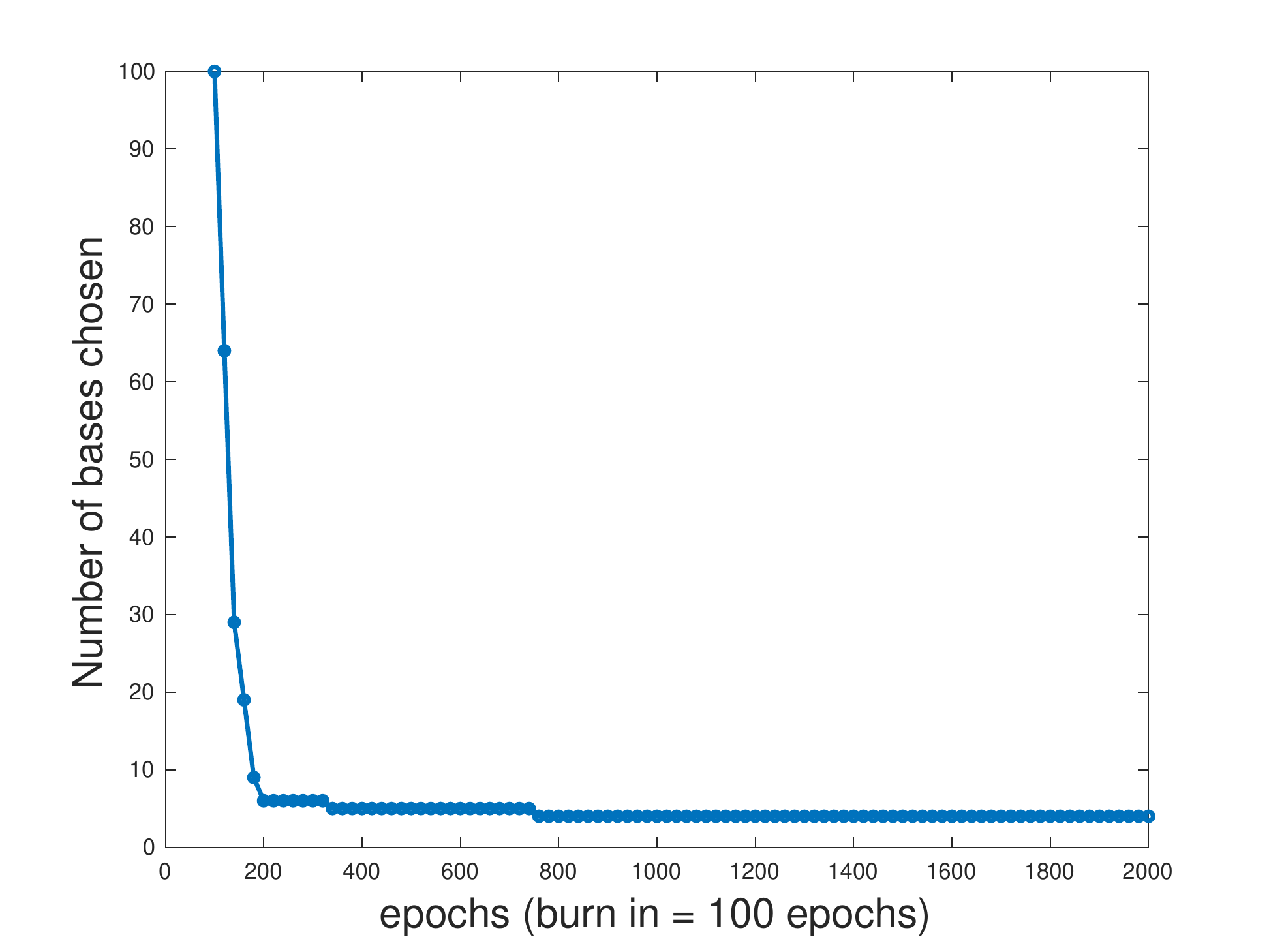}
	\caption{(Section \ref{sec:toy_nonlinear}) Toy example, nonlinear case, the number of selected bases v.s. training epochs. The number of training samples is $800$. The total number of basis functions is 100. After $200$ epochs, the sparsity reaches 90\%. The network identifies the correct sparsity pattern and chose the true basis functions in epoch $760$.}\label{fig:ex1_nonlinear_basis}
\end{figure}

The training and testing history are presented in \ref{fig:ex1_nonlinear}. We obtain similar results as shown in the linear case, where both the training and testing errors obtained from the proposed adaptive pruning method outperform the non-pruning case.

\begin{figure}[!hbt]
	\centering
	\includegraphics[scale=0.3]{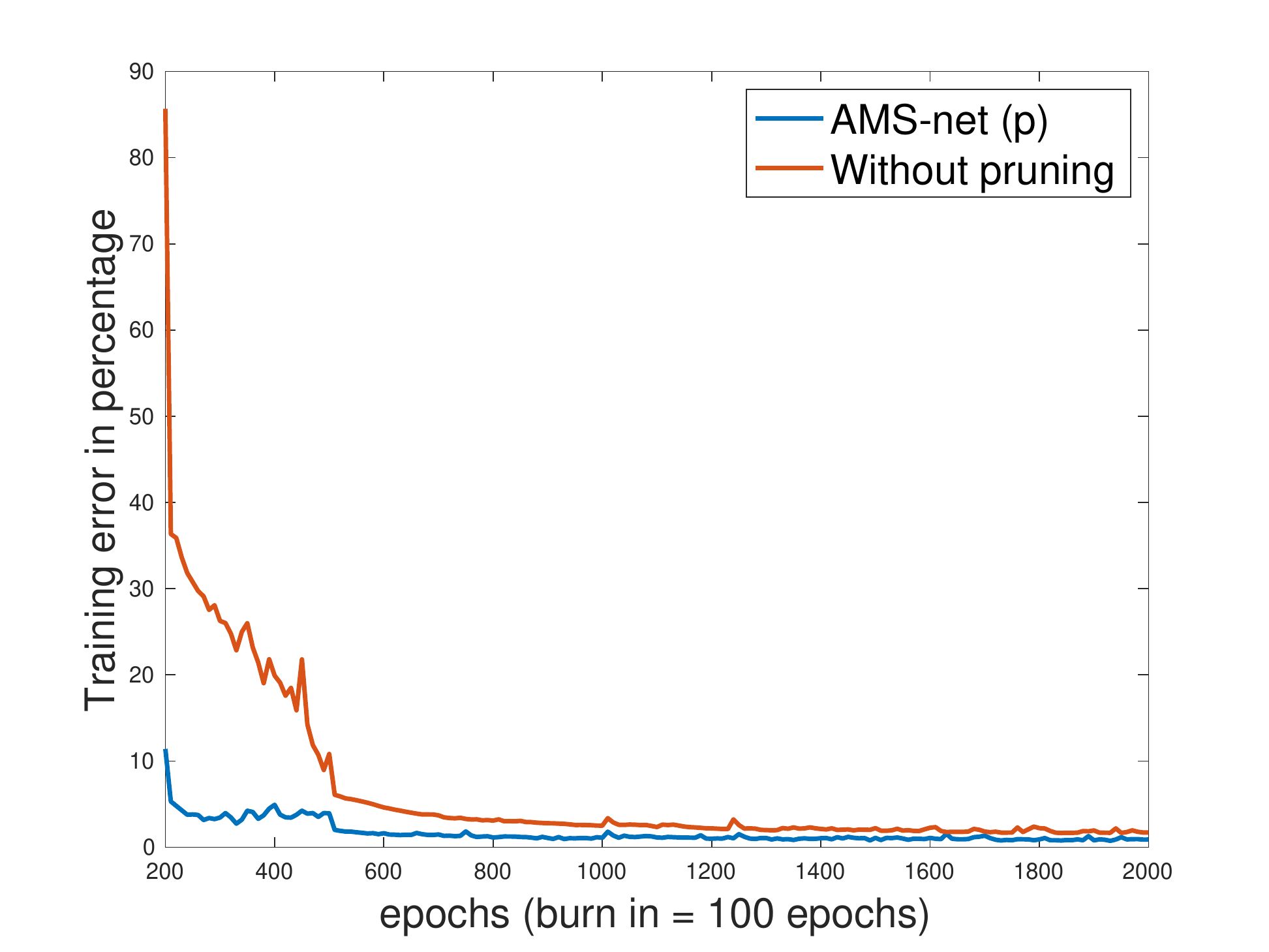}
	\includegraphics[scale=0.3]{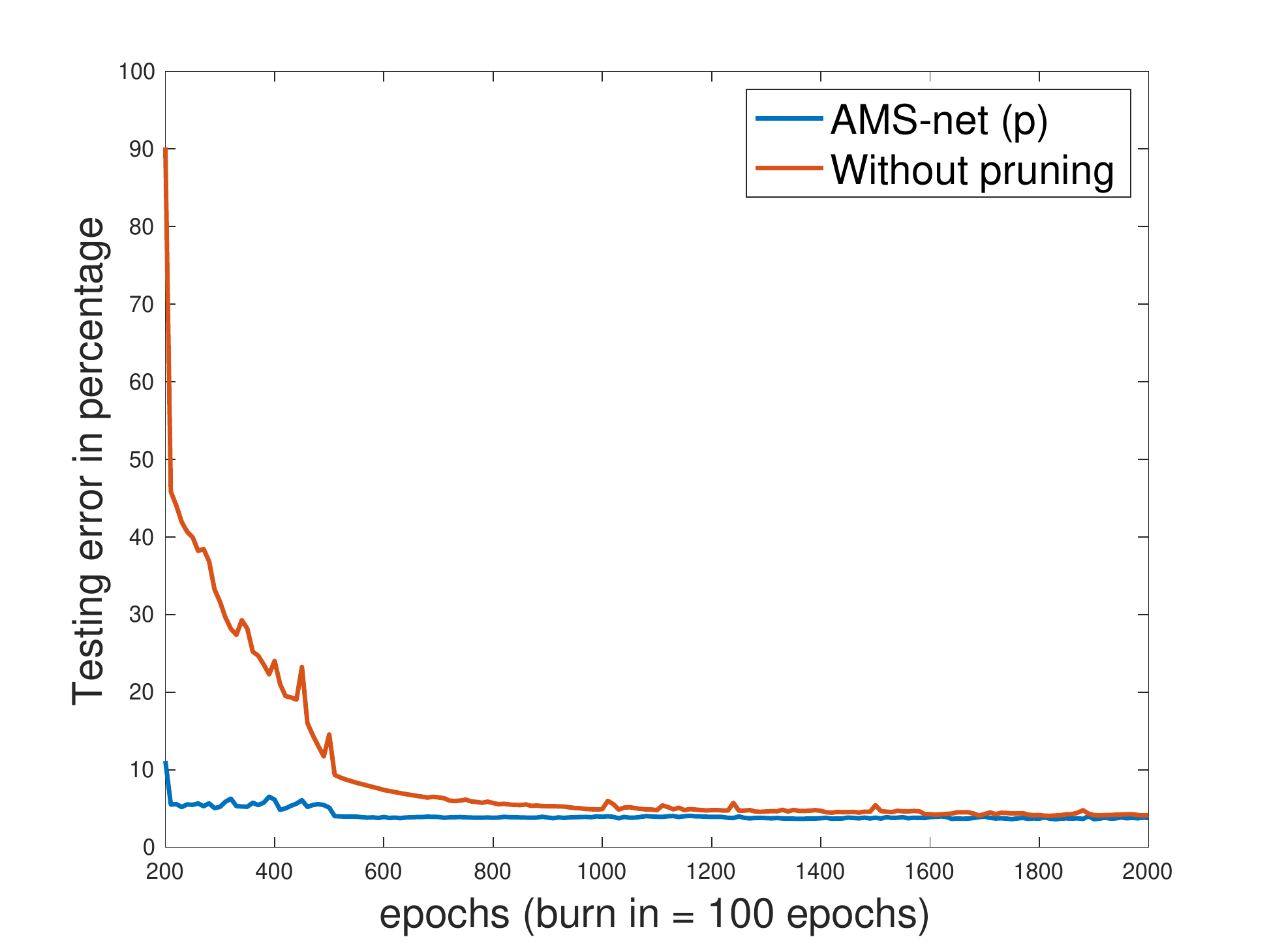}
	\caption{(Section \ref{sec:toy_nonlinear}) Toy example, nonlinear case. Left: training error history, right: testing error history. the number of training/testing samples is 800/200. Comparison between the results using our proposed pruning algorithm and without pruning. It shows that the sparse network training is more efficient.}\label{fig:ex1_nonlinear}
\end{figure}

\subsection{Flow dynamics}

\subsubsection{Velocity sparse approximation in multiscale space} \label{sec:vel_num}
We will generate samples by solving the system \eqref{eq:vel_2ph} and \eqref{eq:sat} sequentially on the fine grid with different source terms.
We take 
\begin{equation*}
    f_i(x,y) = \begin{cases}
      r_1 & \text{if $1-H<x<1 \; \& \; 0<y<H$}\\
      r_2& \text{if $0<x<H \; \& \;1-H<y<1 $}\\
            r_3 & \text{if $0<x<H \; \& \; 0<y<H$}\\
      r_3& \text{if $1-H<x<1   \; \& \;1-H<y<1  $}\\
        -(r_1+r_2+r_3+r_4)& \text{if $5H<x<6H \; \& \;5H<y<6H $}\\
      0 & \text{otherwise}
    \end{cases} 
\end{equation*}
where $r_i$ are randomly chosen in $[0,1]$, $i=1,\cdots,1500$. The absolute permeability $\kappa$ is set to be a layer in SPE10 model. An illustration of the source $f$ and $\kappa$ are shown in Figure \ref{fig:f_kappa}. The simulation is performed on the time interval $[0,16]$, with time step size $\triangle t = 4$. Thus, for each $f_i$, we have fine scale velocity solutions $[v_i^0, v_i^1, \cdots, v_i^2]$ at time steps $t=0, t=8, t=16$, respectively. Our goal is to use $f_i$ as input, $(v_i^0, v_i^1, \cdots, v_i^2)$ as labels to train the neural network using the loss function \eqref{eq:loss}. The dictionary $\mathcal{D}_{\text{vel}}$ \eqref{eq:dic_velocity} is constructed as described in section \ref{sec:vel_basis}. In our example, the computational area is $[0,1]\times [0,1]$,  the fine grid mesh size is $h=\frac{1}{50}$, and the coarse grid mesh size is $H=\frac{1}{10}$. There are $5400$ fine edges and $220$ coarse edges. For each interior coarse edge, we compute $5$ multiscale bases, resulting $900$ bases at each time instance. We note that we let $r_1=r_2=r_3=r_4=1$ in the source $f$ to generate offline basis functions, and only the bases obtained at time instances $t=0, t=8, t=16$ are included in the dictionary. Thus, we get $2700$ bases in the dictionary $\mathcal{D}$. Each input is a $100 \times 1$ vector which represents a coarse grid level source, and each $v_i^j$ ($j=0, \cdots, 3$) is a $5100\times 1$ vector. We define the error $e_1 = \frac{\norm{u_{\text{pred}} - u_{\text{true}} }_{L^2}}{\norm{u_{\text{true}} }_{L^2}}$.

\begin{figure}[!hbt]
	\centering
	\includegraphics[scale=0.3]{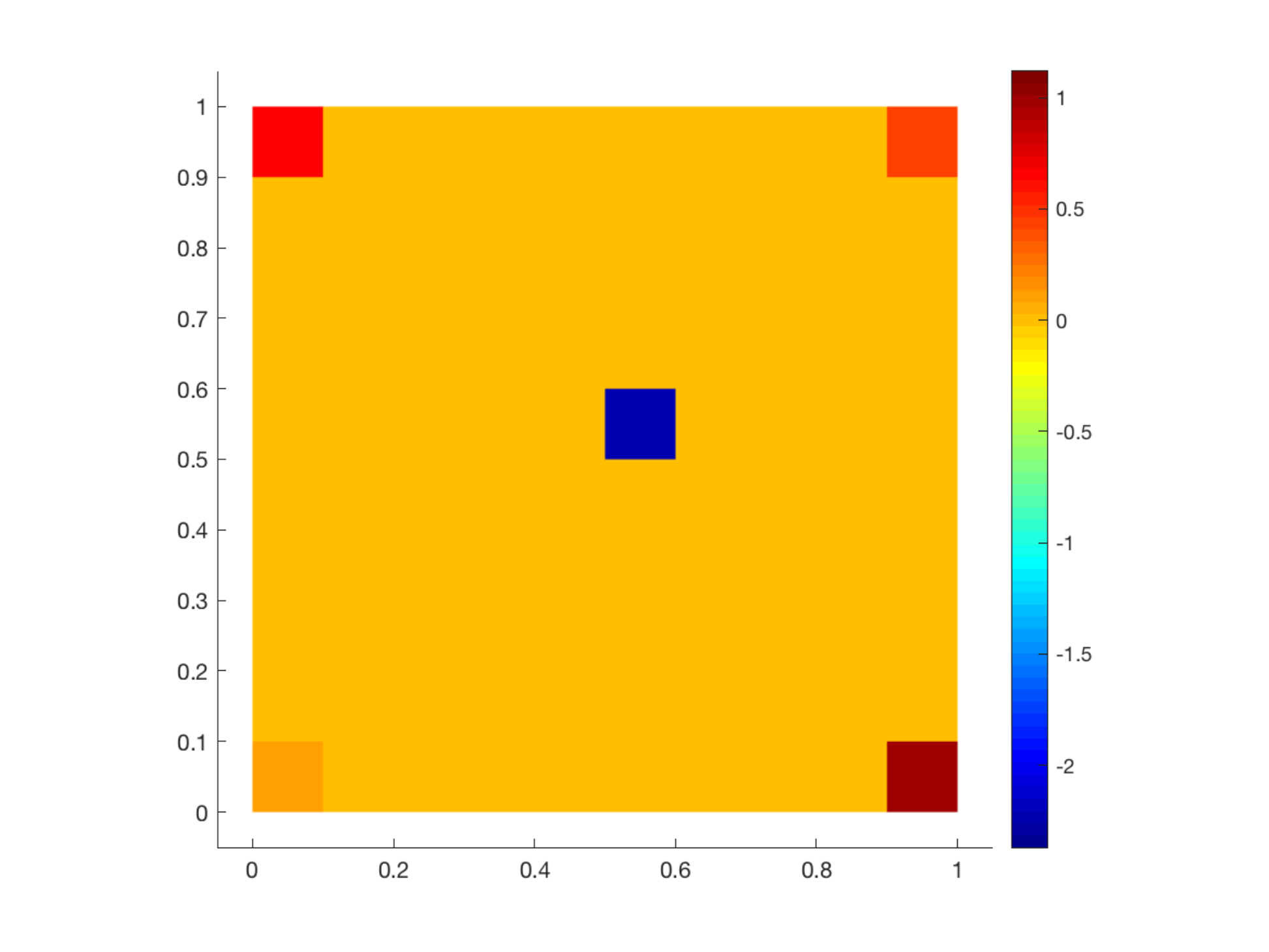} \includegraphics[scale=0.3]{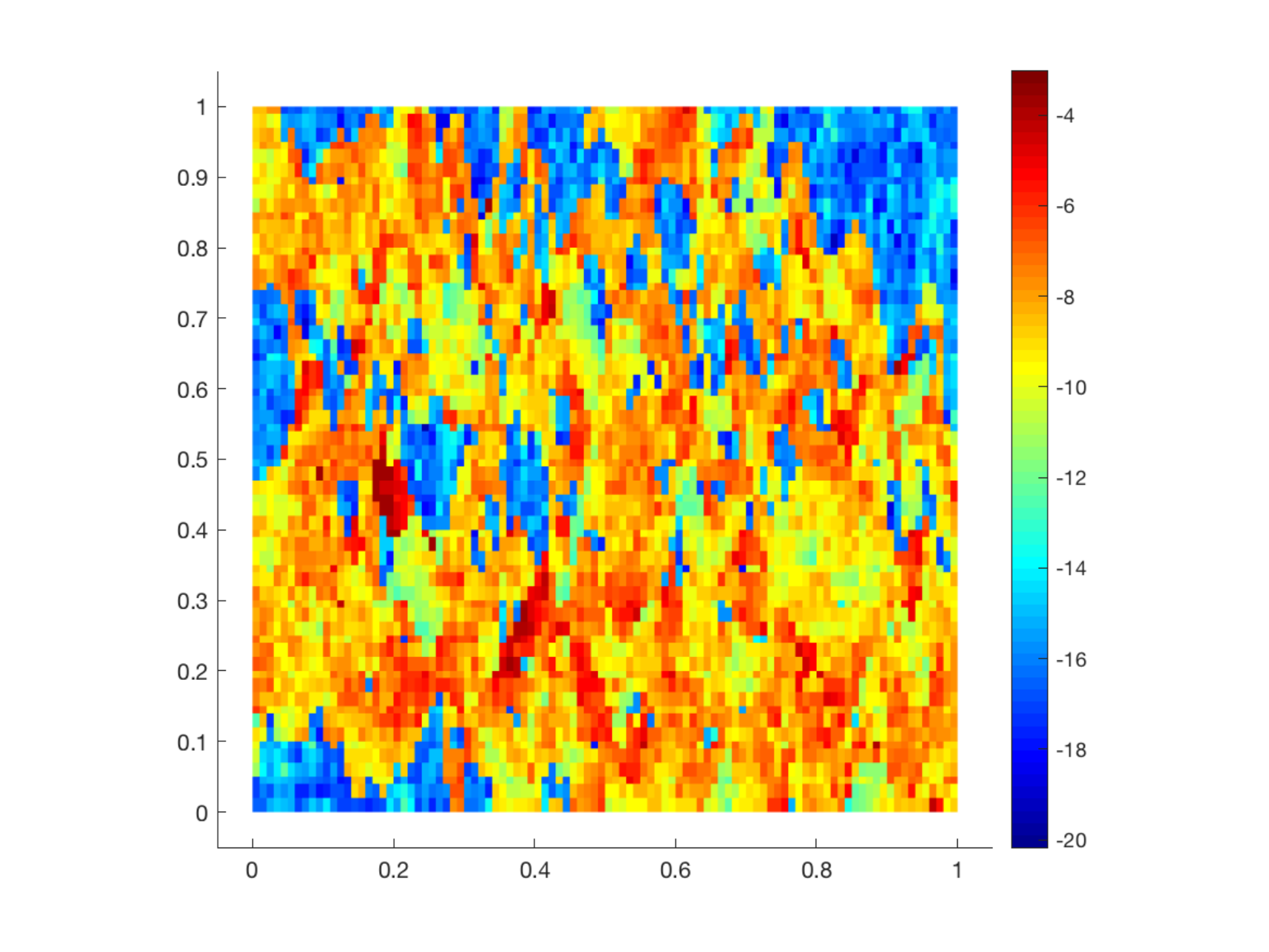}
	\caption{(Section \ref{sec:vel_num}) An illustration of the source term $f$ (left) and absolute permeability $\kappa$ in log scale (right). } \label{fig:f_kappa}
\end{figure}

\noindent We use $80\%$ of samples for training and the rest for testing. The total number of training epochs is 2000. We compare the results when we use the proposed algorithm \ref{alg:sparse}, with the results when we only apply pruning at the last epoch. 
 
 \noindent We use the notation `` AMS-net (p+a)" as a short for our proposed adaptive sparse method with both removing and adding bases,  `` AMS-net (p)" for our proposed adaptive sparse method with only removing bases. We also use the greedy algorithm to select bases based on the fine solution samples, and compute the projections of the fine scale solutions on the greedily selected space, and use them as a reference. 
 
 \noindent  We first set the target number of bases we would like to include in the training. The comparison of the errors obtained from these three approaches are presented in Table \ref{tab:example2}. The degrees of freedom (dof) in the table are the average of the degrees of freedom among solutions at all 6 time steps. We observe that with the similar size of the degrees of freedom, `` AMS-net (p+a)" outperforms `` AMS-net (p)". When the dof is less than 1000, `` AMS-net (p+a)" also produces better results compared with the greedy projection error. As the degrees of freedom increases, the mean prediction error decreases consistently.
 
 \noindent We also test the approach with a given threshold $\gamma_t^{\text{add}}$ to guide the procedure of adding bases, the results are presented in Figure \ref{fig:example2-2}. We note that $\gamma_t^{\text{add}} = \gamma_0^{\text{add}} \frac{100}{(100+t)^{0.75}}$ is a decreasing sequence. It shows that as $\gamma_0^{\text{add}}$ decreases, the algorithm selects more degrees of freedom and the error of prediction decreases too. The decreasing rate is almost linear. This demonstrates that with the threshold $\gamma_t^{\text{add}}$, we can control the training error of the network and thus obtain better prediction results. When $\gamma_t^{\text{add}}$ approach 0, the mean prediction error is $0.95\%$, which is some irreducible snapshot error.

\noindent We also present the prediction errors using the adaptive sparse method with both pruning and adding bases at each time step in Figure \ref{fig:ex2_allsteps}. Two random test cases are shown in Figure \ref{fig:ex2_sample1} and \ref{fig:ex2_sample2}, where we see good matches between the reference solutions and network predictions.

\textit{Remark}: In this work, we assume the dictionary can be built in advance. The basis functions in the dictionary are found by solving some local problems corresponding to various permeabilities based on developed numerical techniques. Once computed, the dictionary won’t be reconstructed when the permeability field changes. Given this large dictionary, one may first employ some preselection techniques such as clustering to narrow down the search of basis functions for a given application. Then our proposed method will be beneficial to find a much sparse representation for the quantities of interest. The focus of our paper is to develop an efficient and stable algorithm to obtain a fast and computational cheap solver given the dictionary. After the sparse network is trained given the data, we can apply it to evaluate the test cases in a very fast manner.


\begin{table}[!htb]
	\centering
	\begin{tabular}{| c |c  |c | c  |}
		\hline
		dofs (approximate) & AMS-net (p) & AMS-net (p+a) & Greedy projection error \\  \hline
	    400  &9.37 &7.23 &7.51  \\  \hline
		500 &7.00 &4.78 &5.00  \\  \hline
		600  &5.48 &3.05 &3.84 \\  \hline
		700 &4.29 &1.88 &2.82   \\  \hline
		800  &3.04 &1.23 &2.10   \\  \hline
		900  &2.31 &1.07 &1.44   \\  \hline
		1000  &1.78 &0.95 & 0.94  \\  \hline
		
	\end{tabular}
	\caption{(Section \ref{sec:vel_num}) Learning velocity fields. Mean errors between the true and predicted velocity solution among $200$ testing samples. Column 2 shows the results when we just use pruning strategy without adding basis. Column 3 shows the results when we use the adaptive strategy with both pruning and adding basis. In column 4, we use the greedy algorithm to select bases based on the training samples, and compute the mean of projection error for the testing cases in the greedily selected space. The errors are in percentage.}\label{tab:example2}
\end{table}

\begin{figure}[!hbt]
	\centering
	\includegraphics[scale=0.3]{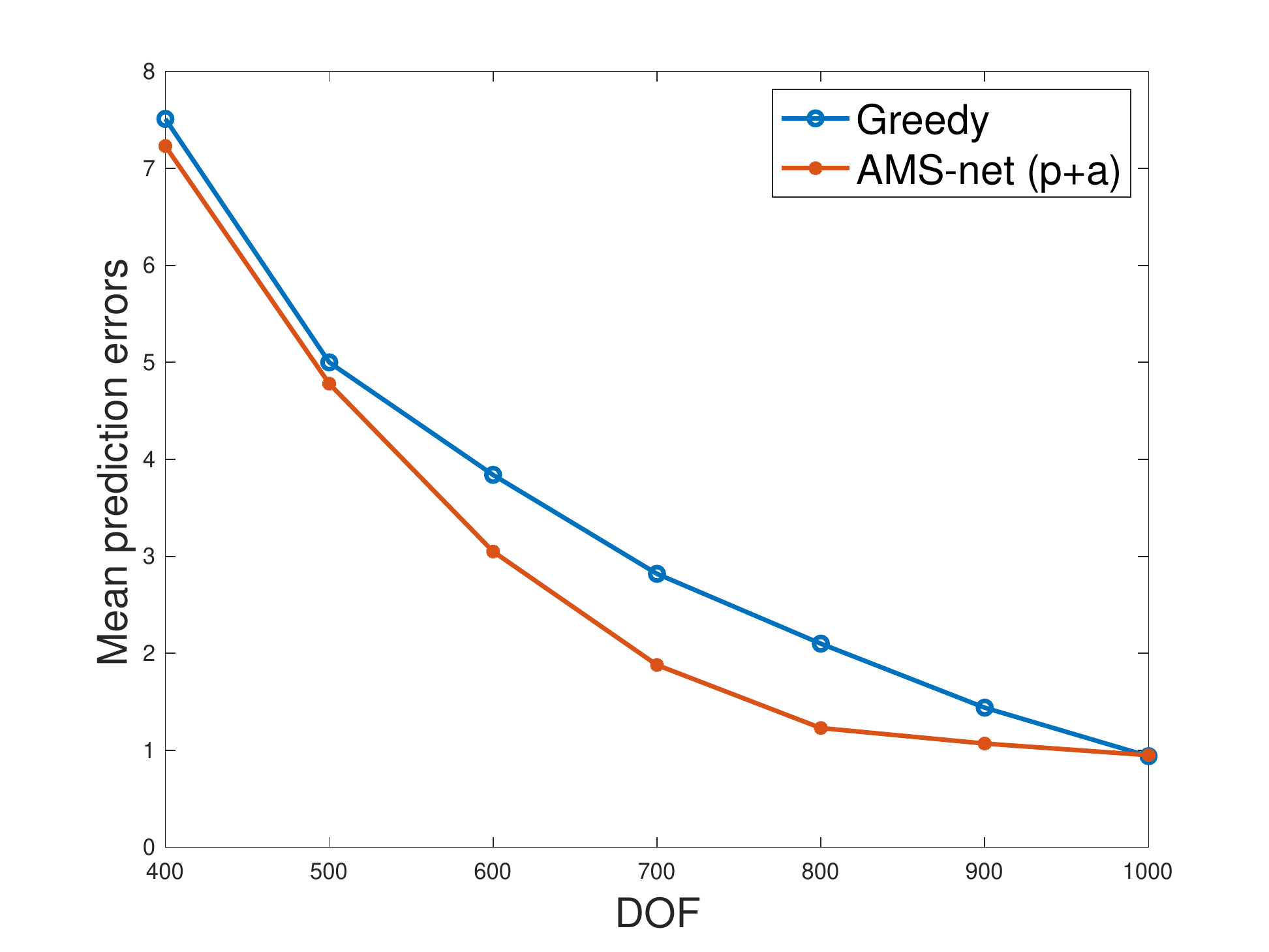}
	\includegraphics[scale=0.3]{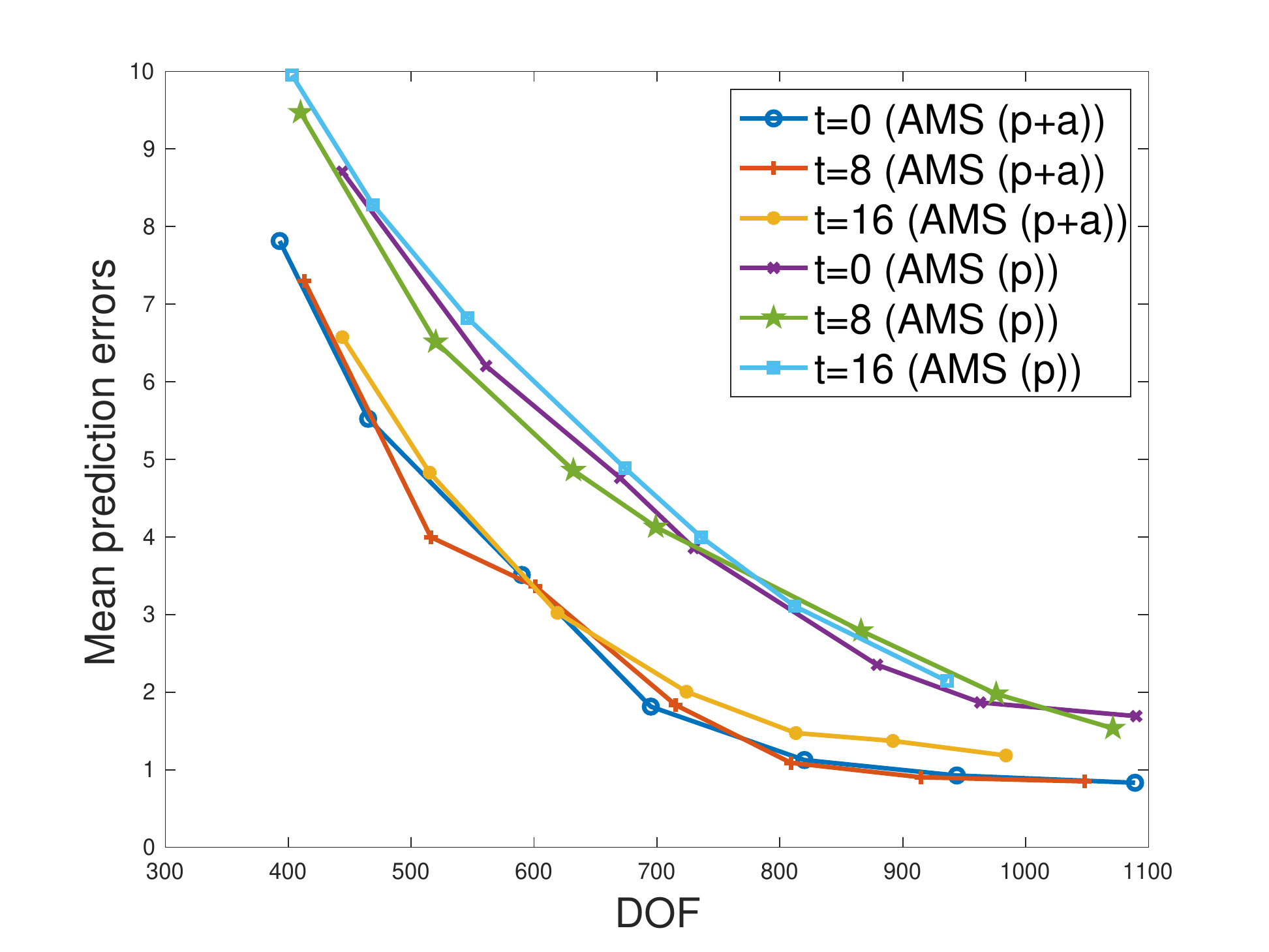}
	\caption{(Section \ref{sec:vel_num}) Learning velocity fields. Mean prediction errors among 200 testing samples, at different time steps and different dofs. With a increasing number of basis selected in AMS-net, the prediction errors will decrease. Left: mean errors over all time steps. Our algorithm produces better results consistently when the dof is less than 1000, and it converges to the snapshot error when the dof become larger. Right: mean errors at each time step.
	} \label{fig:ex2_allsteps}
\end{figure}


\begin{figure}[!hbt]
	\centering
	\includegraphics[scale=0.4]{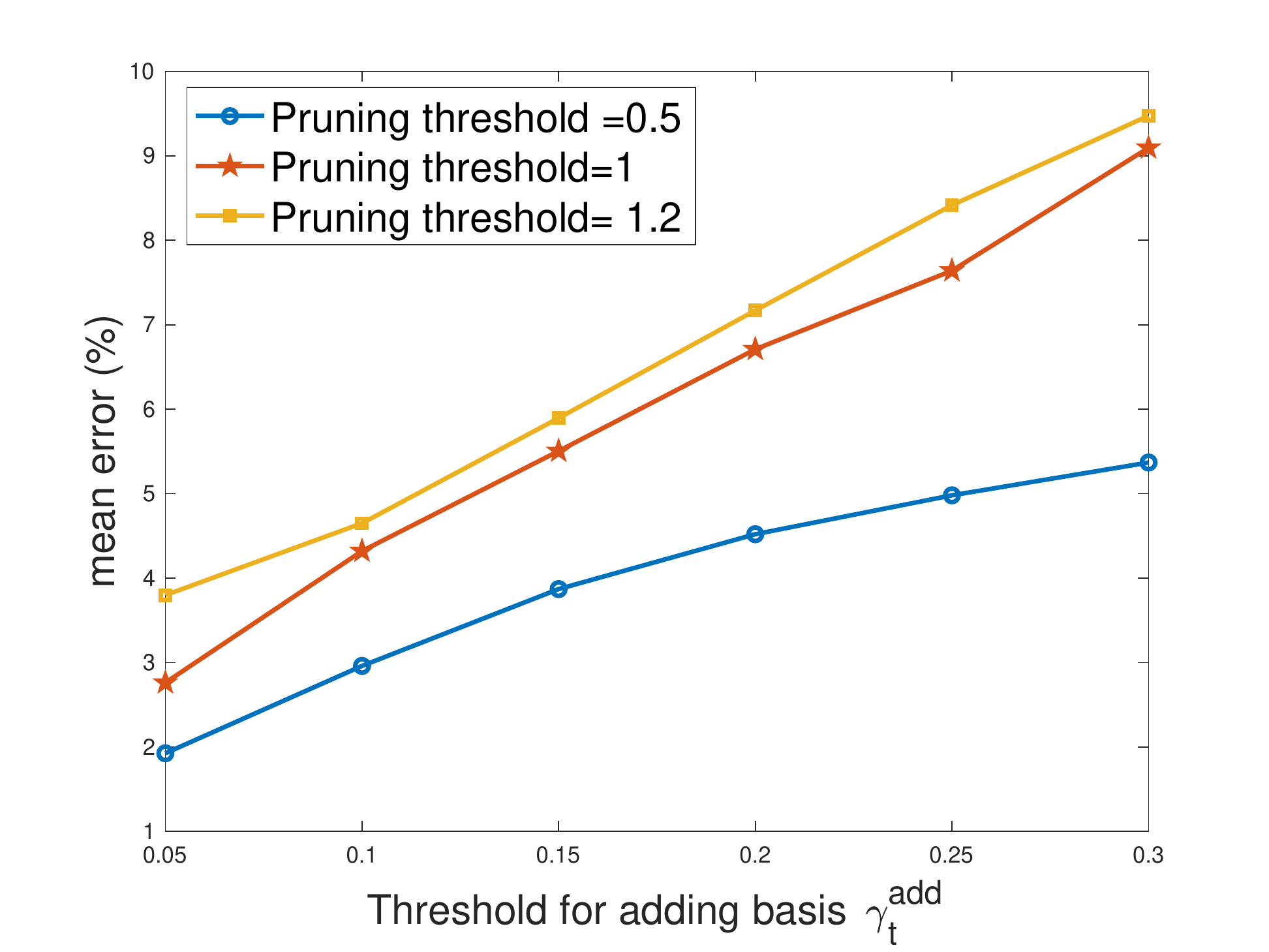}
	\caption{(Section \ref{sec:vel_num}) Learning velocity fields. Using given thresholding parameters $\gamma_t^{\text{add}}$ to adaptively add bases in AMS-net. Mean errors between the true and predicted velocity among $200$ testing samples. It shows that decreasing the value of thresholding parameter can help to control the accuracy of AMS-net predictions. }\label{fig:example2-2}
\end{figure}

\subsubsection{Saturation sparse approximation in POD space} \label{sec:sat_num}
In this example, we take similar source configurations as described in the previous section. 
We first solve the system with a specific source term on the time interval $[0,4]$ with time step size $\triangle t = 0.1$. Gather the saturation solutions on these $40$ time instances together, we will perform POD on it and select the resulting bases to form $V_{\text{sat}}$, and $\mathcal{D}_{\text{sat}}$ in \eqref{eq:dic_saturation}.

\noindent Next, for each source $f_i$ ($i = 1,\cdots, 1000$), we have solve for fine saturation solutions on the time interval $[0,6]$ with time step size $\triangle t = 0.1$. However, we only take the solutions every $10$ time steps to train the neural network, i.e, $[S_i^1, S_i^{10}, \cdots, S_i^{60}]$. Again, we use $f_i$ as input, $(S_i^1, S_i^{10}, \cdots, S_i^{60})$ as labels, and the previously mentioned dictionary to train the neural network. In this case, the fine degree of freedom for the saturation solution is $10000$, and the reduced order space has dimension $40$. We define the error $e_1 = \frac{\norm{S_{\text{pred}} - S_{\text{true}} }_{L^2}}{\norm{S_{\text{true}} }_{L^2}}$. The total number of training epochs is $2500$. 

\noindent We compare the results when we use the proposed algorithm \ref{alg:sparse} with both pruning and adding bases, the algorithm with only pruning and the POD projection error. The errors using these three approaches are listed in Table \ref{tab:example3}. The dofs in the first column of the table are the mean dofs at all 6 time steps. We observe that, AMS-net (with both pruning and adding bases) produces similar results when dof is equal to 40, and achieve better predictions when the dof is equal to 6, 8, 10, 12. It actually converges to the snapshot error when all bases are used (dof = 40).
A random test case is shown in Figure \ref{fig:example3}, where we see good matches between the reference and network prediction.


\begin{table}[!htb]
	\centering
	\begin{tabular}{| c |c  |c  | c |}
		\hline
    dofs  & AMS-net (p) & AMS-net (p+a) & POD projection \\  \hline
    6  &2.21   &1.96  & 2.10 \\  \hline
    8  &1.72  &1.62 &1.67 \\  \hline
    10  &1.45   & 1.35  &1.40 \\  \hline
    12 &1.23 &1.18& 1.22  \\  \hline
     40 &0.95   &0.95  &0.95 \\  \hline
	\end{tabular}
	\caption{(Section \ref{sec:sat_num}) Learning saturation profile. Fix the number of bases in the adaptive process. Mean errors between the true and predicted saturation among $200$ testing samples. AMS-net (with both pruning and adding bases) produces better results when the dof is equal to 6, 8, 10, 12, and it converges to the projection error when all the bases are used (dof = 40). The errors are in percentage.}\label{tab:example3}
\end{table}

\begin{figure}[!hbt]
	\centering
	\includegraphics[scale=0.28]{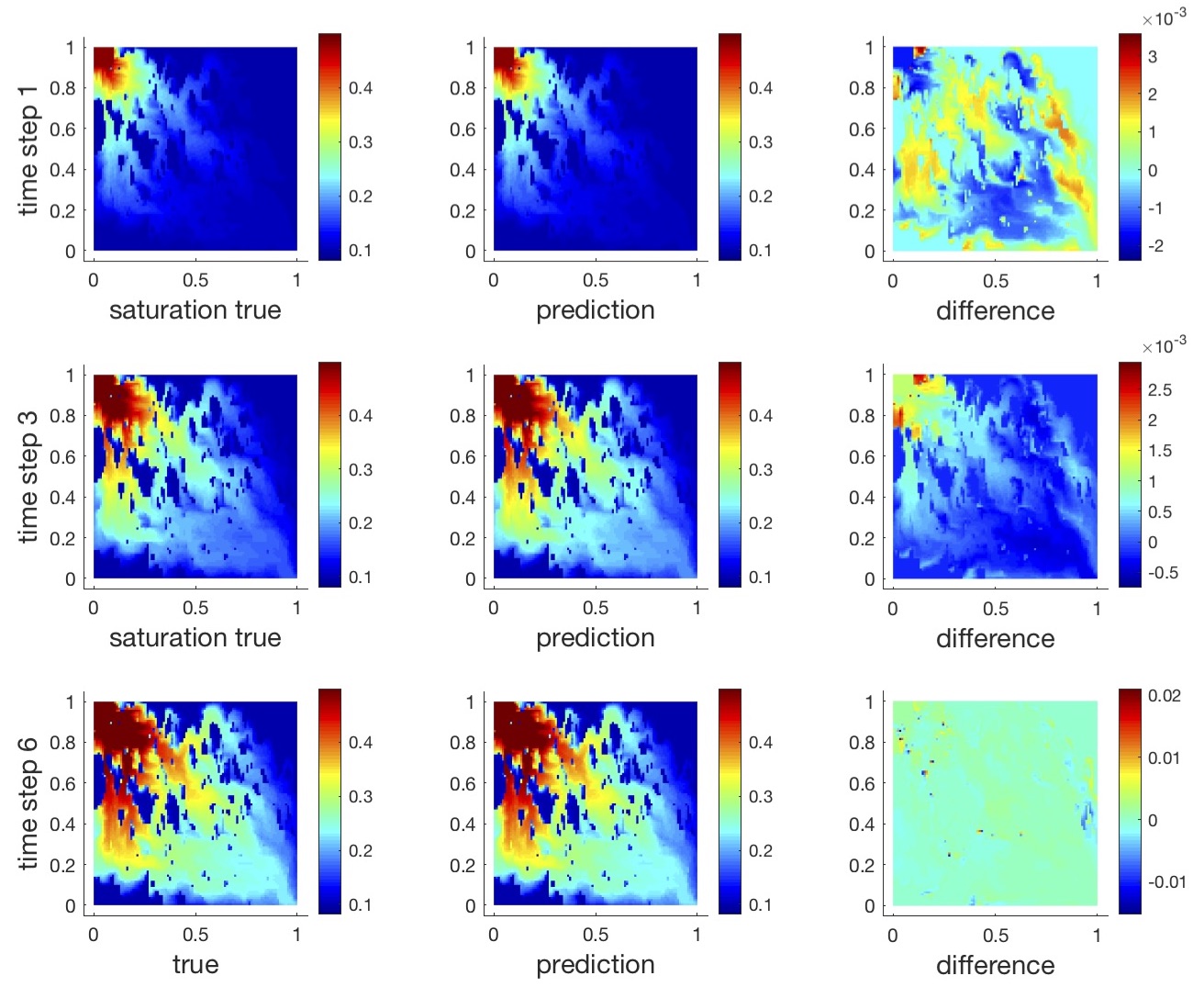}
	\caption{(Section \ref{sec:sat_num}) Learning saturation profile. Test case illustration: saturation at all the first, third and last time steps. The predicted results matches the reference solution well.} \label{fig:example3}
\end{figure}

\textit{Remark}: Our method can be extended naturally to practical applications with more complicated physics, such as permeability variations, compressible fluids or the case with gravitational effects. First of all, we assume the dictionary is precomputed and contains enough basis functions which can capture the features of the underlying media for a wide range of cases. The construction of basis functions is discussed in many existing works and is beyond the scope of this paper. The key of our method is to adaptively select important basis functions from the known large dictionary through the proposed sparse learning algorithm and obtain reduced order models for the dynamical system. 

In the case of compressible fluids, for example, the two phase flow with gas and oil, the basis functions we need might be different at different time steps. If the gravity is also considered, extra bases are needed to approximate the gravity force. In such cases, our method can automatically choose suitable sets of basis from the dictionary at corresponding time instances and capture the complex physical properties. Since the dictionary can be very large to cover complicated applications, our method is more beneficial to obtain the sparse representation and achieve an efficient approximation. For the more complex nonlinear system, one may also consider employing a larger neural network to approximate the dynamics, this will cause more difficulties for the learning process. Our algorithm enforces sparsity in the network connections and can help training.

\subsubsection{Saturation sparse approximation: a simple illustration of interpretibility}\label{sec:sat_interp}

In this section, we will use simple basis functions to approximate the saturation solution to illustrate the interpretibility of the proposed method. 
Consider the 10-by-10 coarse mesh in the domain $[0,1]\times [0,1]$, the absolute permeability is a fractured media and the source term $f$ has a two-spot well configuration, as shown in Figure \ref{fig:ex3_sat_mesh_source}. To simplify, we will use  piecewise constant basis functions. That is, for each coarse block and each fracture segment inside the coarse block, we will have a degree of freedom associated with it. Each basis function has a value equal to $1$ in the coarse block or the fracture segment, and has value equal to $0$ elsewhere. The degrees of freedom (dof) with respect to fracture segments are labeled from dof1 to dof 21 as presented in Figure \ref{fig:ex3_sat_mesh_source}. We perform simulations on the time interval $T=(0,60]$, with time step size $\triangle t = 1$, and only select the solutions at time instances $t=10, 20, 30, 40, 50, 60$ to train the neural network with the proposed method. Let $r=2$ be the injection rate in the source term, the solutions at time steps $t=10,30,60$ are shown in Figure \ref{fig:ex3_sat_solution}. We observe that the saturation 
hardly goes into the fracture associated with dof 2, dof 4, dof 6, and dof 8. Moreover, at the early time steps, the fluid did not saturate into the fracture associated with dof 15 - dof 21, but was fully saturated in the last time step. Now, we choose a set of $100$ random injection rates in $[1.5, 2.5]$, and use their corresponding solutions to train the network. Our purpose is to observe how the network chooses basis functions to represent fractures and matrices.
The results are shown in Table \ref{tab:ex3_dofs}. We see that the network can identify important dofs correctly. This shows the potential of interpretibility of our proposed method.

\begin{figure}[!htp]
	\begin{center}

			\includegraphics[scale=0.2]{./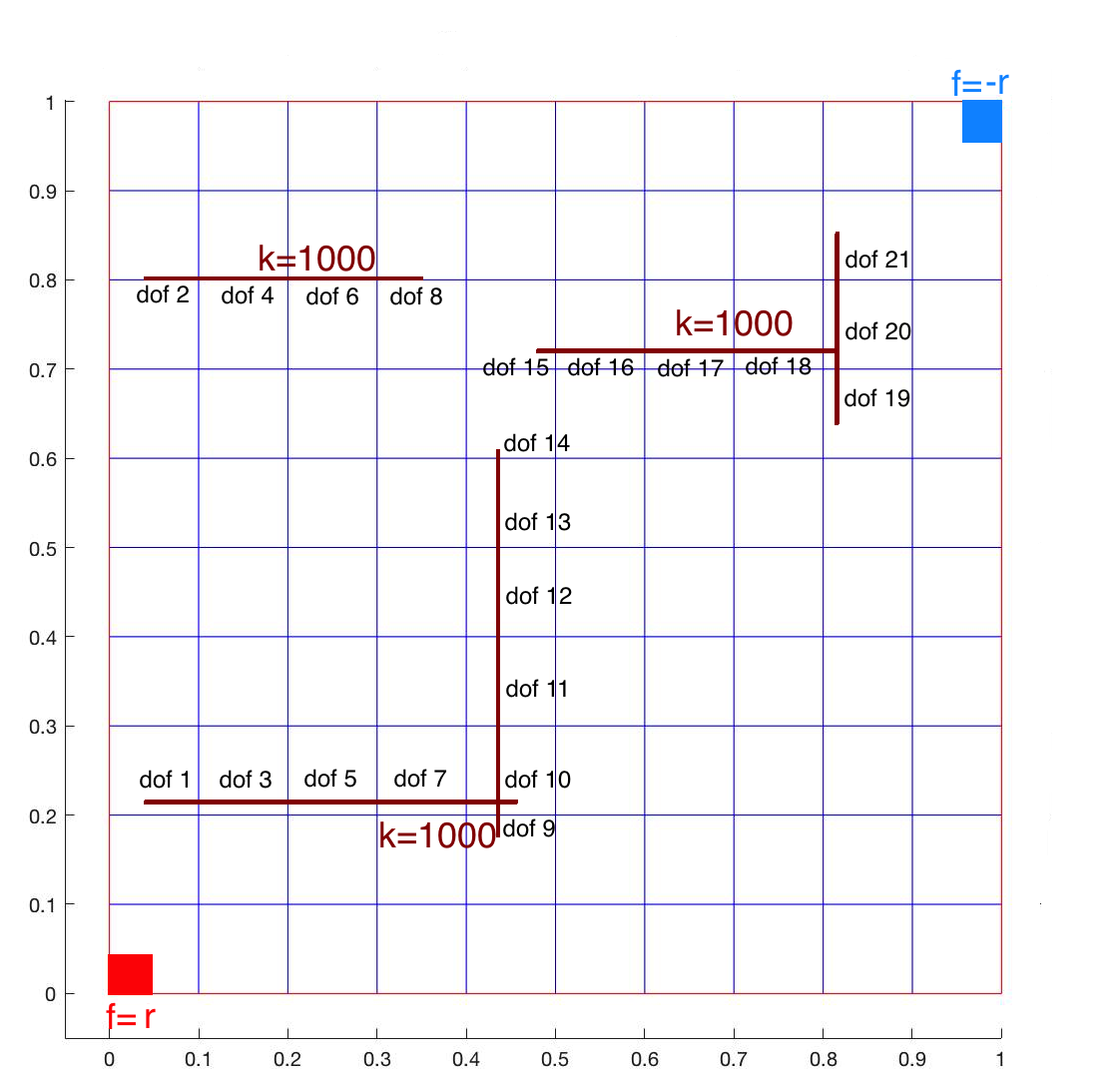}
			\caption{(Section \ref{sec:sat_interp}) The background shows the 10-by-10 coarse mesh. The absolute permeability $\kappa$ takes value $1$ in the background, and takes value $1000$ in the maroon-colored channels. The source function $f$ takes value $0$in the background, $f=r (r>0)$ in the red region in the bottom left corner, and $f=-r$ in the red region in the top right corner. The degrees of freedom (dof) with respect to channels are labeled from dof1 to dof 21.}  \label{fig:ex3_sat_mesh_source}
	\end{center}
\end{figure}

\begin{figure}[!htp]
	\centering
			\includegraphics[scale=0.2]{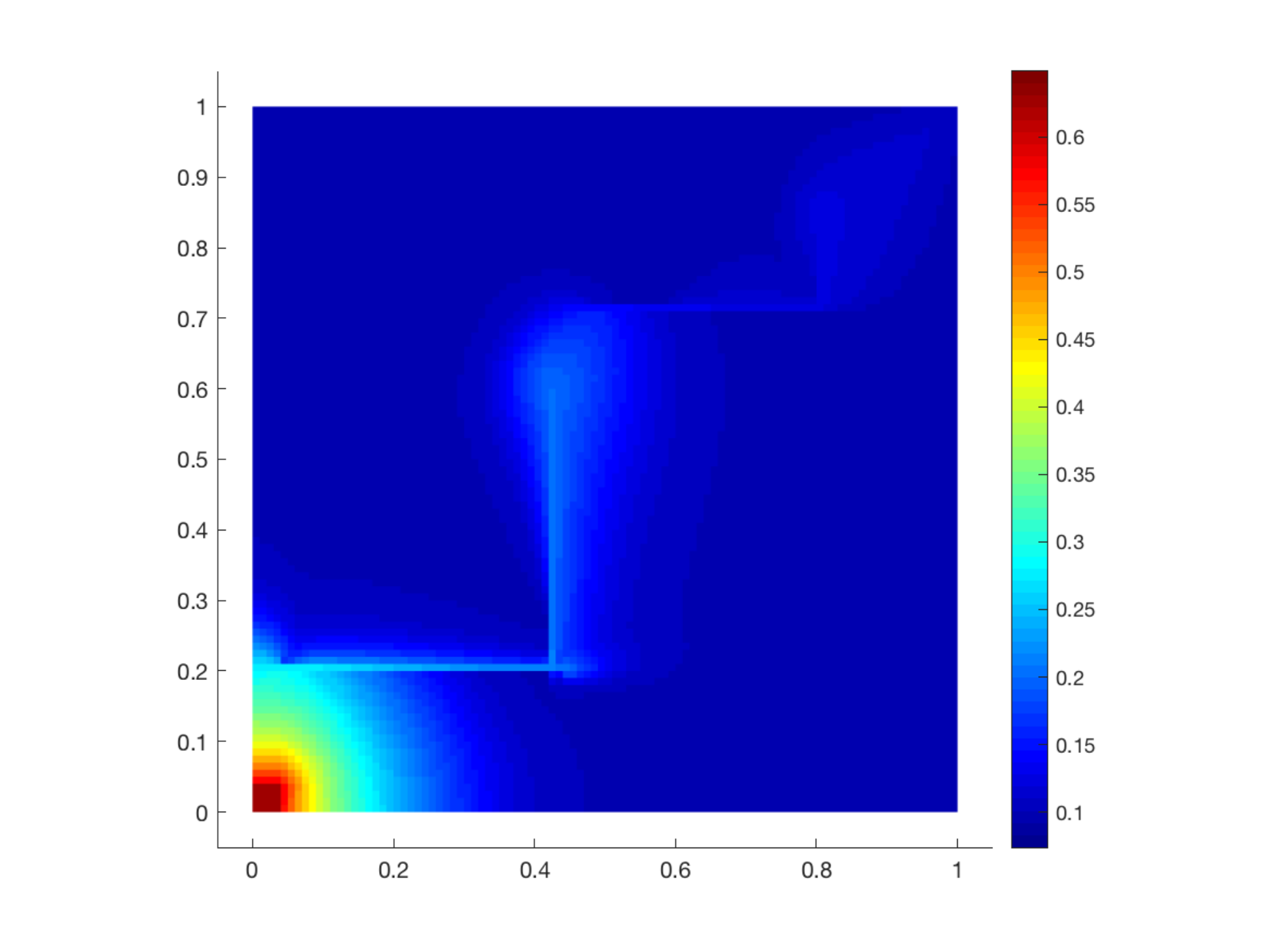}
			\includegraphics[scale=0.2]{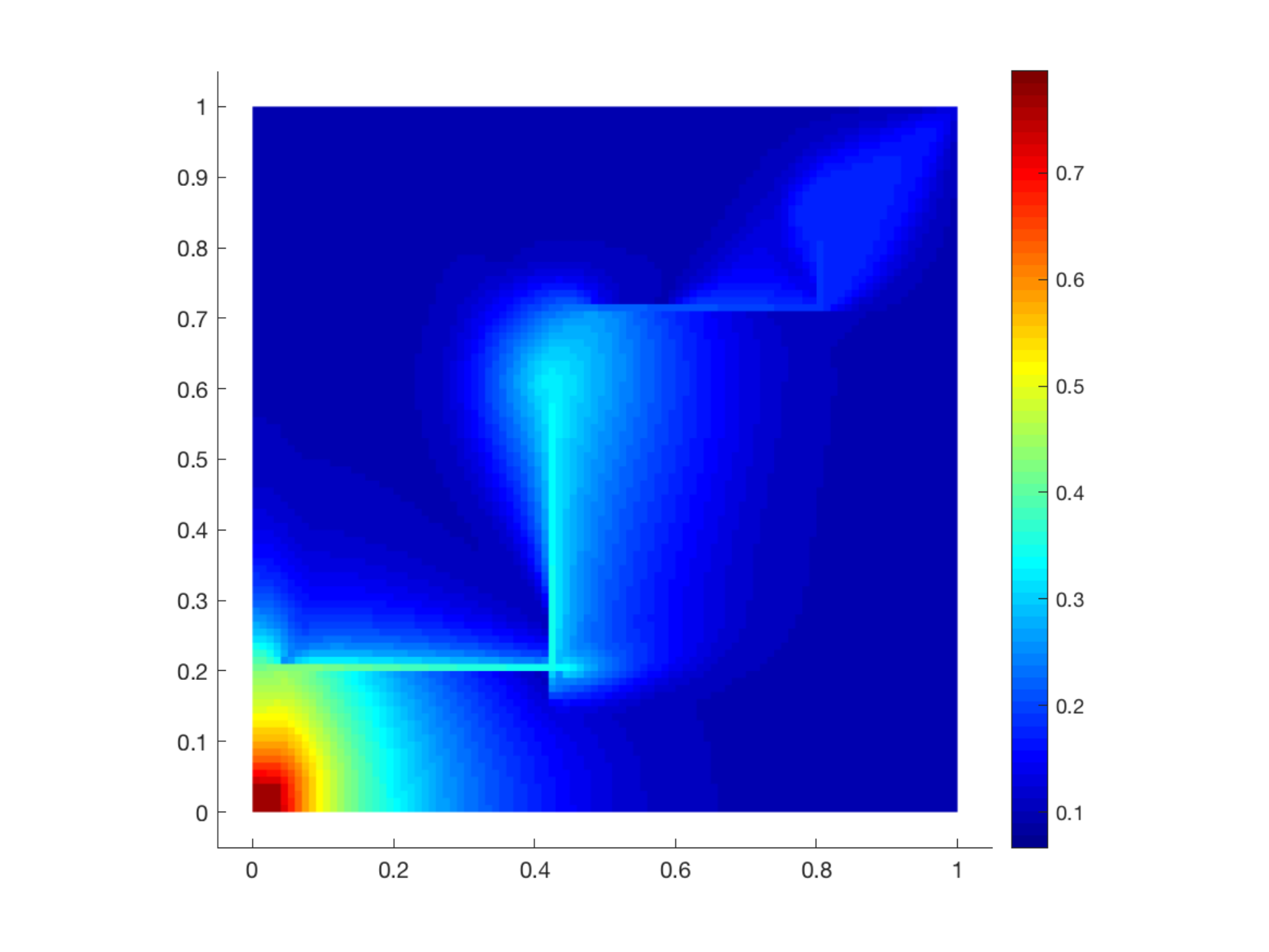}
			\includegraphics[scale=0.2]{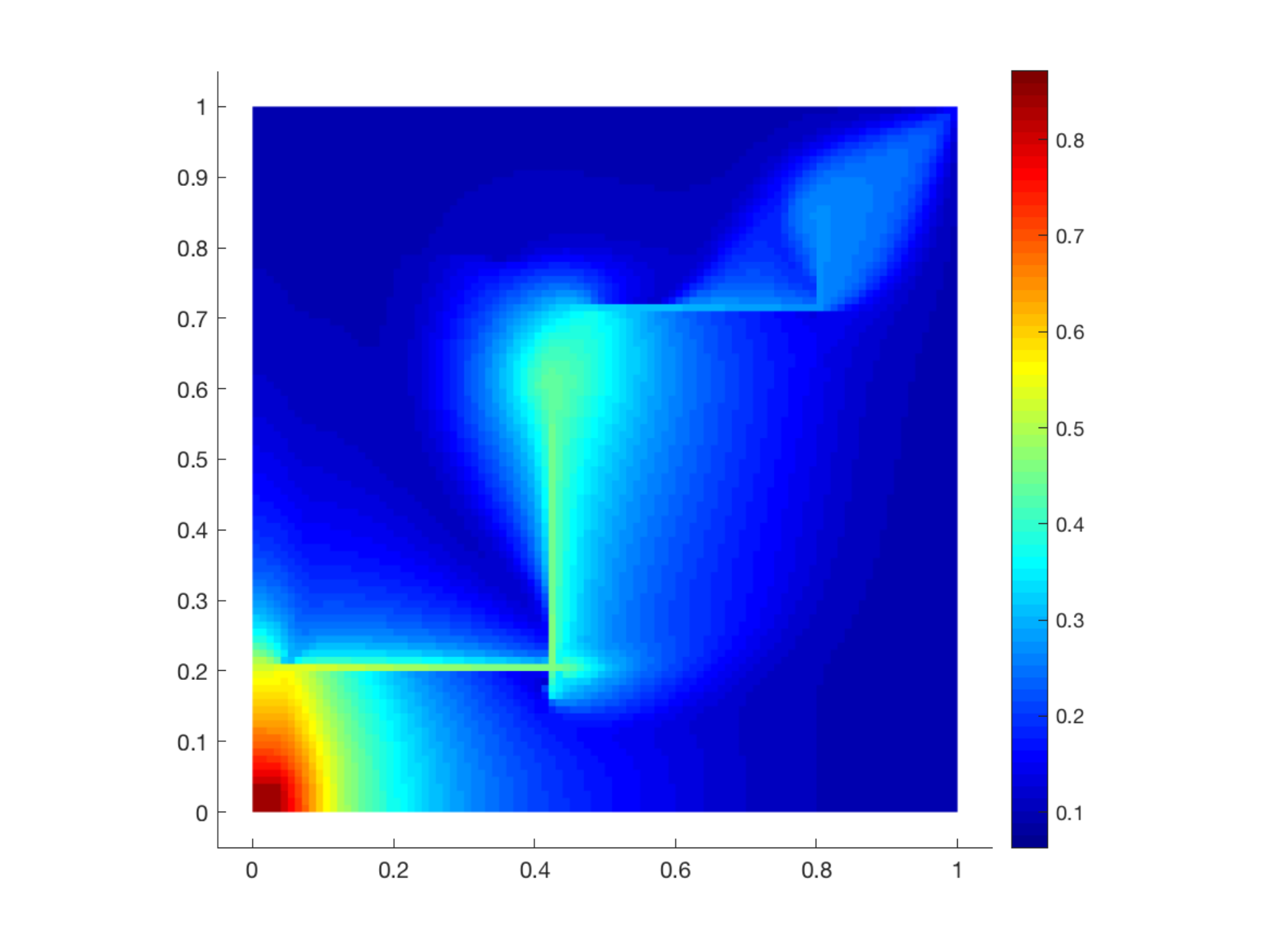}
			\caption{(Section \ref{sec:sat_interp}) From left to right: saturation at $t=10$, $t=30$, and $t=60$.}\label{fig:ex3_sat_solution}
\end{figure}

\begin{table}[!htb]
	\centering
	\begin{tabular}{| c |l ||}
		\hline
		Time step& The unselected dofs resulted from AMS-net  \\  \hline
	    t=10  &  2, 4, 6, 8, 9, 13, 14, 15, 16, 17, 18, 19, 20, 21  \\  \hline
		t=20 & 2, 4, 6, 8,  13, 14, 18, 19, 20, 21   \\  \hline
		t=30 & 2, 4, 6, 8,   14, 19, 20, 21 \\  \hline
		t=40 & 2, 4, 6,  14, 19, 21  \\  \hline
		t=50  & 2, 4, 6, 19, 21   \\  \hline
		t=60 &  2, 4, 19, 21  \\  \hline
	\end{tabular}
	\caption{(Section \ref{sec:sat_interp}) The unselected dofs resulted from AMS-net at all time steps. The saturation hardly goes into the fracture associated with dof 2, dof 4, dof 6 and dof 8. Moreover, the fluid didn't saturated into the fracture associated with dof 15 - dof 21 at the early time steps, but was fully saturated in the last time step. The desired dofs are identified by our algorithm.}\label{tab:ex3_dofs}
\end{table}

\section{Conclusion}\label{sec:conclusion}
We present a scalable sparse learning framework, which incorporates some precomputed basis functions in the learning objective. The network aims to learn the flow dynamics where the parameters in the flow model contain multiscale properties. The inputs are random source terms, and the labels are fine scale solutions at different time steps. The outputs of the neural network are coefficients of the solutions corresponding to the basis functions at these time instances. The predicted solutions are then formed by the product of the coefficient vectors and basis functions. The objective is to minimize the differences between the predicted solutions and fine scale reference solutions over all time steps. The algorithm can adaptively choose important basis functions from a large pool of different source inputs. The sparsity in the solution coefficient vector is enforced through a built-in thresholding operator, which is implemented as an activation function in some layers of the network. The sparsity of layer connections in the network is achieved by cutting the connections to coefficients with small magnitude. To avoid dropping too many basis functions and enrich the approximation space during the training, one can also add some degrees of freedom back through a greedy procedure. Through adaptive training, one can obtain a sparse set of important basis functions and an accurate approximation to the flow dynamics. Several numerical tests are performed to demonstrate sparse and accurate approximations to the solutions using the proposed algorithm.  

\section*{Appendix}

\begin{figure}[!hbt]
	\centering
    \includegraphics[scale=0.42]{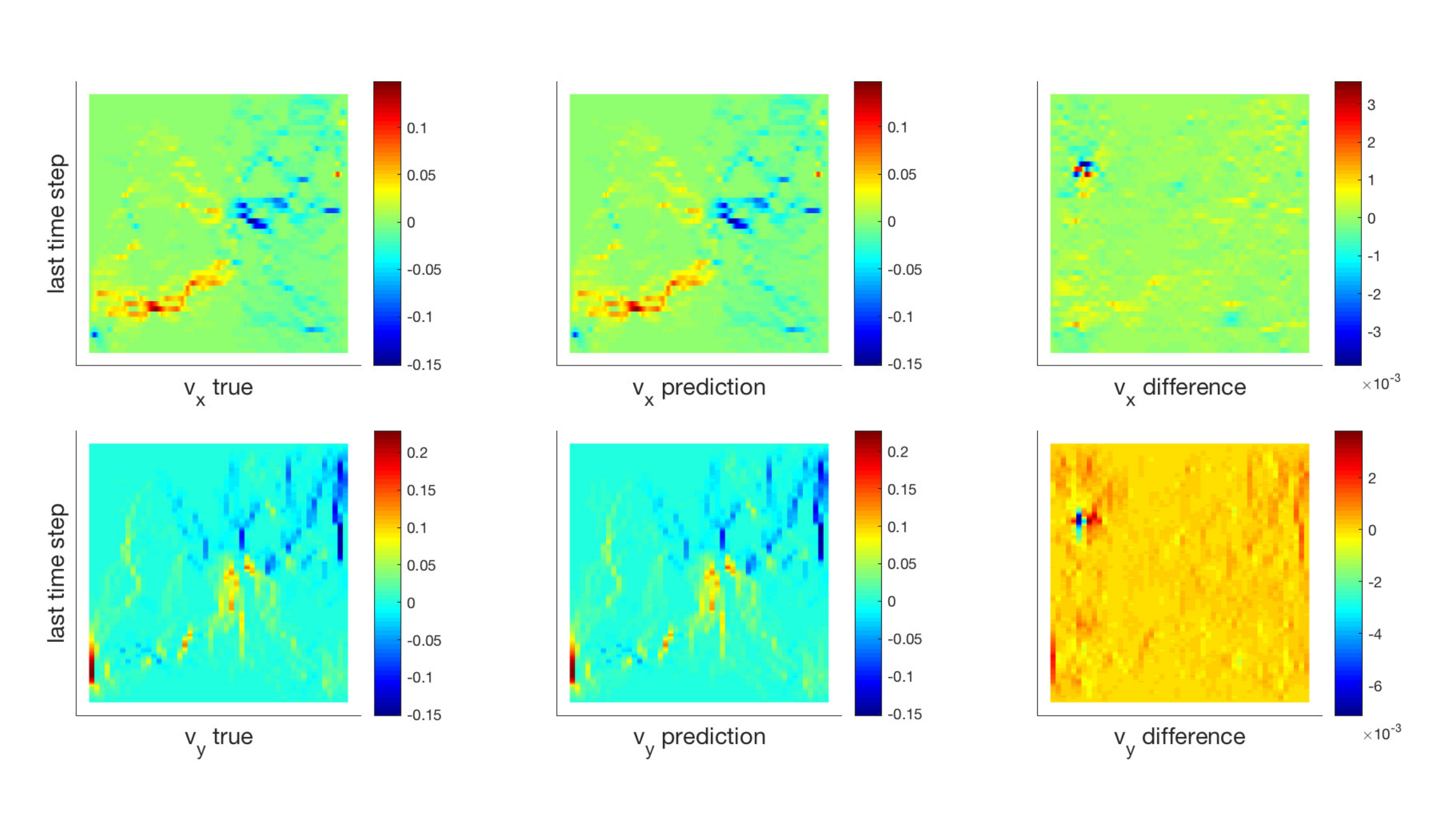}
	\caption{(Section \ref{sec:vel_num}) Learning velocity fields, test case 1. AMS-net predictions produces accurate predictions using 1000 basis. The relative $l_2$ error is 1.58\% at the last time step. }\label{fig:ex2_sample1}
\end{figure}

\begin{figure}[!hbt]
	\centering
	\includegraphics[scale=0.42]{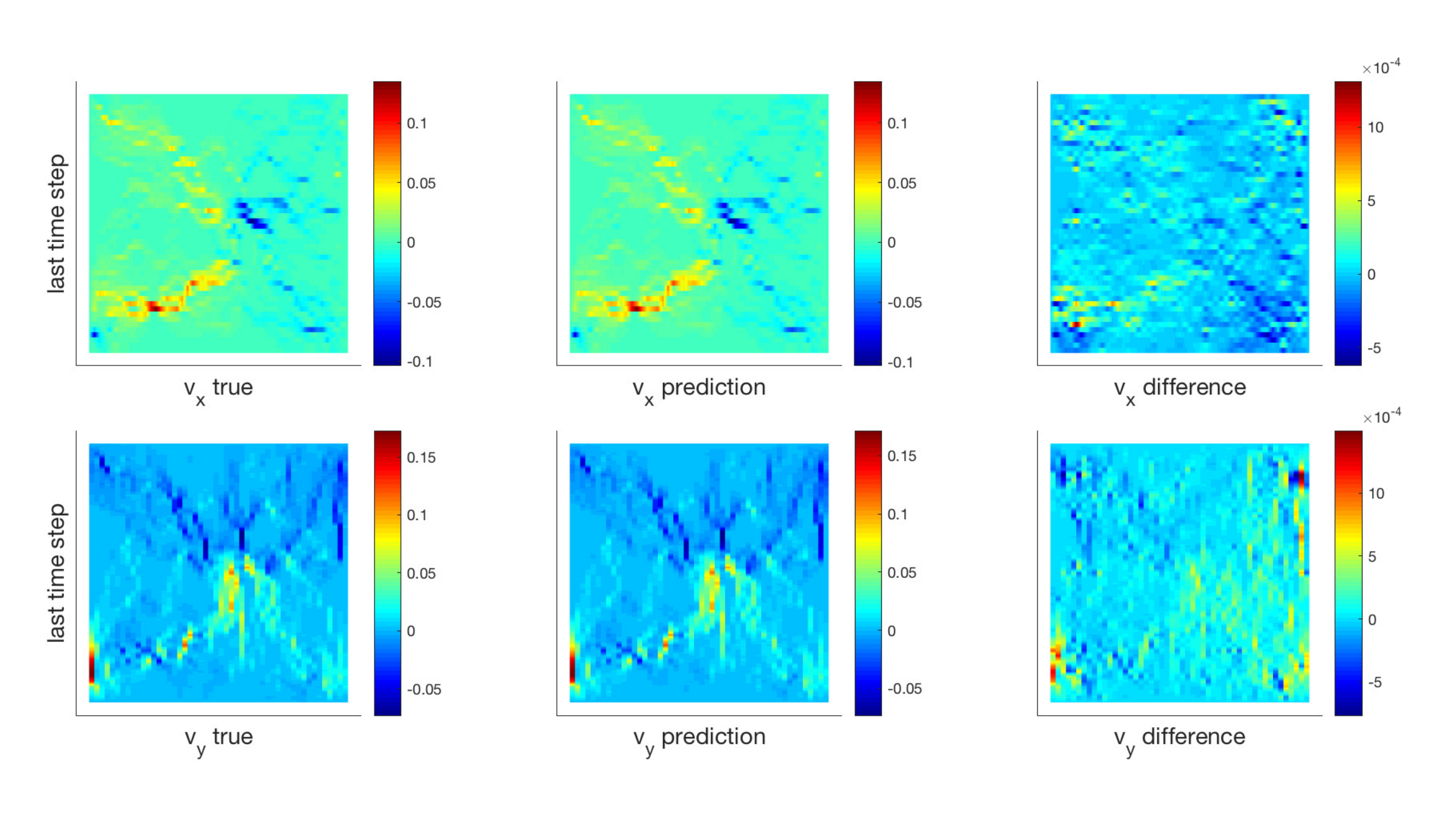}
	\caption{(Section \ref{sec:vel_num}) Learning velocity fields, test case 2. AMS-net predictions produces accurate predictions using 1000 basis. The relative $l_2$ error is 0.99\% at the last time step. }\label{fig:ex2_sample2}
\end{figure}


\section*{Acknowledgements}
We gratefully acknowledge the support from the National Science Foundation (DMS-1555072, DMS-1736364, DMS-2053746, and DMS-2134209), and Brookhaven National Laboratory Subcontract 382247, ARO/MURI grant W911NF-15-1-0562, and U.S. Department of Energy (DOE) Office of Science Advanced Scientific Computing Research program DE-SC0021142.

\bibliographystyle{siam} 
\bibliography{references}

\begin{thebibliography}{10}

\bibitem{aarnes2008mixed}
{\sc J.~E. Aarnes, Y.~Efendiev, and L.~Jiang}, {\em Mixed multiscale finite
  element methods using limited global information}, Multiscale Modeling \&
  Simulation, 7 (2008), pp.~655--676.

\bibitem{nonlinear_AM2015}
{\sc M.~Alotaibi, V.~M. Calo, Y.~Efendiev, J.~Galvis, and M.~Ghommem}, {\em
  Global–local nonlinear model reduction for flows in heterogeneous porous
  media}, Computer Methods in Applied Mechanics and Engineering, 292 (2015),
  pp.~122--137.

\bibitem{arbogast2007multiscale}
{\sc T.~Arbogast, G.~Pencheva, M.~F. Wheeler, and I.~Yotov}, {\em A multiscale
  mortar mixed finite element method}, Multiscale Modeling \& Simulation, 6
  (2007), pp.~319--346.

\bibitem{bhattacharya2020model}
{\sc K.~Bhattacharya, B.~Hosseini, N.~B. Kovachki, and A.~M. Stuart}, {\em
  Model reduction and neural networks for parametric pdes}, arXiv preprint
  arXiv:2005.03180,  (2020).

\bibitem{buffa2012priori}
{\sc A.~Buffa, Y.~Maday, A.~T. Patera, C.~Prud’homme, and G.~Turinici}, {\em
  A priori convergence of the greedy algorithm for the parametrized reduced
  basis method}, ESAIM: Mathematical Modelling and Numerical
  Analysis-Mod{\'e}lisation Math{\'e}matique et Analyse Num{\'e}rique, 46
  (2012), pp.~595--603.

\bibitem{cegg13}
{\sc V.~Calo, Y.~Efendiev, J.~Galvis, and M.~Ghommem}, {\em Multiscale
  empirical interpolation for solving nonlinear pdes using generalized
  multiscale finite element methods}, arXiv preprint arXiv:1407.0103,  (2014).

\bibitem{chan2016adaptive}
{\sc H.~Y. Chan, E.~Chung, and Y.~Efendiev}, {\em Adaptive mixed gmsfem for
  flows in heterogeneous media}, Numerical Mathematics: Theory, Methods and
  Applications, 9 (2016), pp.~497--527.

\bibitem{pod_dl_SW}
{\sc S.~W. Cheung, E.~T. Chung, Y.~Efendiev, E.~Gildin, Y.~Wang, and J.~Zhang},
  {\em Deep global model reduction learning in porous media flow simulation},
  Computational Geosciences, 24 (2020), pp.~261--274.

\bibitem{chinesta2011short}
{\sc F.~Chinesta, P.~Ladeveze, and E.~Cueto}, {\em A short review on model
  order reduction based on proper generalized decomposition}, Archives of
  Computational Methods in Engineering, 18 (2011), p.~395.

\bibitem{NLMC}
{\sc E.~T. Chung, Efendiev, W.~T. Leung, M.~Vasilyeva, and Y.~Wang}, {\em
  Non-local multi-continua upscaling for flows in heterogeneous fractured
  media}, arXiv preprint arXiv:1708.08379,  (2018).

\bibitem{MixedGMsFEM}
{\sc E.~T. Chung, Y.~Efendiev, and C.~S. Lee}, {\em Mixed generalized
  multiscale finite element methods and applications}, Multiscale Modeling \&
  Simulation, 13 (2015), pp.~338--366.

\bibitem{chung2017constraint}
{\sc E.~T. Chung, Y.~Efendiev, and W.~T. Leung}, {\em Constraint energy
  minimizing generalized multiscale finite element method}, arXiv preprint
  arXiv:1704.03193,  (2017).

\bibitem{AdaptiveGMsFEM}
{\sc E.~T. Chung, Y.~Efendiev, and G.~Li}, {\em An adaptive {GM}s{FEM} for high
  contrast flow problems}, J. Comput. Phys., 273 (2014), pp.~54--76.

\bibitem{ee03}
{\sc W.~E and B.~Engquist}, {\em Heterogeneous multiscale methods}, Comm. Math.
  Sci., 1 (2003), pp.~87--132.

\bibitem{GMsFEM13}
{\sc Y.~Efendiev, J.~Galvis, and T.~Hou}, {\em Generalized multiscale finite
  element methods (gmsfem)}, Journal of Computational Physics, 251 (2013),
  pp.~116--135.

\bibitem{egw10}
{\sc Y.~Efendiev, J.~Galvis, and X.~Wu}, {\em Multiscale finite element methods
  for high-contrast problems using local spectral basis functions}, Journal of
  Computational Physics, 230 (2011), pp.~937--955.

\bibitem{ehg04}
{\sc Y.~Efendiev, T.~Hou, and V.~Ginting}, {\em Multiscale finite element
  methods for nonlinear problems and their applications}, Comm. Math. Sci., 2
  (2004), pp.~553--589.

\bibitem{Fan2018MNNH}
{\sc Y.~Fan, L.~Lin, L.~Ying, and L.~Zepeda-N{\'u}nez}, {\em A multiscale
  neural network based on hierarchical matrices}, Multiscale Modeling \&
  Simulation, 17 (2019), pp.~1189--1213.

\bibitem{gerner2012certified}
{\sc A.-L. Gerner and K.~Veroy}, {\em Certified reduced basis methods for
  parametrized saddle point problems}, SIAM Journal on Scientific Computing, 34
  (2012), pp.~A2812--A2836.

\bibitem{he2019reduced}
{\sc X.~He, Q.~Li, and L.~Jiang}, {\em A reduced generalized multiscale basis
  method for parametrized groundwater flow problems in heterogeneous porous
  media}, Water Resources Research, 55 (2019), pp.~2390--2406.

\bibitem{henning2009heterogeneous}
{\sc P.~Henning and M.~Ohlberger}, {\em The heterogeneous multiscale finite
  element method for elliptic homogenization problems in perforated domains},
  Numerische Mathematik, 113 (2009), pp.~601--629.

\bibitem{hfmq98}
{\sc T.~Hughes, G.~Feij\'oo, L.~Mazzei, and J.-B. Quincy}, {\em The variational
  multiscale method - a paradigm for computational mechanics}, Comput. Methods
  Appl. Mech Engrg., 127 (1998), pp.~3--24.

\bibitem{jiang2017model}
{\sc L.~Jiang and Q.~Li}, {\em Model's sparse representation based on reduced
  mixed gmsfe basis methods}, Journal of Computational Physics, 338 (2017),
  pp.~285--312.

\bibitem{karumuri2020simulator}
{\sc S.~Karumuri, R.~Tripathy, I.~Bilionis, and J.~Panchal}, {\em
  Simulator-free solution of high-dimensional stochastic elliptic partial
  differential equations using deep neural networks}, Journal of Computational
  Physics, 404 (2020), p.~109120.

\bibitem{li2020data}
{\sc S.~Li, Z.~Zhang, and H.~Zhao}, {\em A data-driven approach for multiscale
  elliptic pdes with random coefficients based on intrinsic dimension
  reduction}, Multiscale Modeling \& Simulation, 18 (2020), pp.~1242--1271.

\bibitem{li2020variational}
{\sc Y.~Li, J.~Lu, and A.~Mao}, {\em Variational training of neural network
  approximations of solution maps for physical models}, Journal of
  Computational Physics, 409 (2020), p.~109338.

\bibitem{lu2019deeponet}
{\sc L.~Lu, P.~Jin, and G.~E. Karniadakis}, {\em Deeponet: Learning nonlinear
  operators for identifying differential equations based on the universal
  approximation theorem of operators}, arXiv preprint arXiv:1910.03193,
  (2019).

\bibitem{meng2020composite}
{\sc X.~Meng and G.~E. Karniadakis}, {\em A composite neural network that
  learns from multi-fidelity data: Application to function approximation and
  inverse pde problems}, Journal of Computational Physics, 401 (2020),
  p.~109020.

\bibitem{nguyen2008multiscale}
{\sc N.~C. Nguyen}, {\em A multiscale reduced-basis method for parametrized
  elliptic partial differential equations with multiple scales}, Journal of
  Computational Physics, 227 (2008), pp.~9807--9822.

\bibitem{peng2020accelerating}
{\sc W.~Peng, W.~Zhou, J.~Zhang, and W.~Yao}, {\em Accelerating
  physics-informed neural network training with prior dictionaries}, arXiv
  preprint arXiv:2004.08151,  (2020).

\bibitem{PINN2}
{\sc M.~Raissi, P.~Perdikaris, and G.~Karniadakis}, {\em Physics informed deep
  learning (part ii): Data-driven, discovery of nonlinear partial differential
  equations,”, arxiv e-prints, p}, arXiv preprint arXiv:1711.10566,  (2017).

\bibitem{PINN1}
{\sc M.~Raissi, P.~Perdikaris, and G.~E. Karniadakis}, {\em Physics informed
  deep learning (part i): Data-driven solutions of nonlinear partial
  differential equations}, arXiv preprint arXiv:1711.10561,  (2017).

\bibitem{wang2020reduced}
{\sc M.~Wang, S.~W. Cheung, W.~T. Leung, E.~T. Chung, Y.~Efendiev, and
  M.~Wheeler}, {\em Reduced-order deep learning for flow dynamics. the
  interplay between deep learning and model reduction}, Journal of
  Computational Physics, 401 (2020), p.~108939.

\bibitem{wang2018NLMC_DNN}
{\sc Y.~Wang, S.~W. Cheung, E.~T. Chung, Y.~Efendiev, and M.~Wang}, {\em Deep
  multiscale model learning}, Journal of Computational Physics, 406 (2020),
  p.~109071.

\bibitem{wang2020bayesian}
{\sc Y.~Wang, W.~Deng, and L.~Guang}, {\em Bayesian sparse learning with
  preconditioned stochastic gradient mcmc and its applications}, arXiv preprint
  arXiv:2006.16376,  (2020).

\bibitem{wang2020efficient}
{\sc Y.~Wang and G.~Lin}, {\em Efficient deep learning techniques for
  multiphase flow simulation in heterogeneous porousc media}, Journal of
  Computational Physics, 401 (2020), p.~108968.

\bibitem{yang2016PodDeim}
{\sc Y.~Yang, M.~Ghasemi, E.~Gildin, Y.~Efendiev, and V.~Calo.}, {\em Fast
  multiscale reservoir simulations with pod-deim model reduction}, SPE Journal,
  21 (2016), pp.~2--141.

\bibitem{zhang2015multiscale}
{\sc Z.~Zhang, M.~Ci, and T.~Y. Hou}, {\em A multiscale data-driven stochastic
  method for elliptic pdes with random coefficients}, Multiscale Modeling \&
  Simulation, 13 (2015), pp.~173--204.

\bibitem{PCDL_nz}
{\sc Y.~Zhu, N.~Zabaras, P.-S. Koutsourelakis, and P.~Perdikaris}, {\em
  Physics-constrained deep learning for high-dimensional surrogate modeling and
  uncertainty quantification without labeled data}, Journal of Computational
  Physics, 394 (2019), pp.~56--81.

\end{thebibliography}

\end{document}